\begin{document}

	\title{Estimation of Spectral Risk Measures}
	\author[1]{Ajay Kumar Pandey\thanks{ajaykp@cse.iitm.ac.in}}
	\author[1]{Prashanth L. A.\thanks{prashla@cse.iitm.ac.in}}
	\author[2]{Sanjay P. Bhat\thanks{sanjay.bhat@tcs.com}}	
	\affil[1]{\small Department of Computer Science and Engineering, Indian Institute of Technology Madras, Chennai}
	\affil[2]{\small Tata Consultancy Services Limited, Hyderabad}
	\date{}
	\maketitle
	
	\begin{abstract}
		We consider the problem of estimating a spectral risk measure $(\textrm{SRM})$ from i.i.d. samples, and propose a novel method that is based on numerical integration. We show that our $\textrm{SRM}$ estimate concentrates exponentially, when the underlying distribution has bounded support. Further, we also consider the case when the underlying distribution is either Gaussian or exponential, and derive a concentration bound for our estimation scheme. We validate the theoretical findings on a synthetic setup, and in a vehicular traffic routing application.
	\end{abstract}
	
	\section{Introduction}
	\label{introduction}
	In the context of risk-sensitive optimization, Conditional value-at-risk ($\textrm{CVaR}$) is a popular measure.
	CVaR is a conditional expectation of a random variable (r.v.) that usually models the losses in an application (e.g. finance), where the conditioning is based on value-at-risk $(\textrm{VaR})$. The latter denotes the maximum loss that could be incurred, with high probability. The advantage of employing $\textrm{CVaR}$ instead of $\textrm{VaR}$ in a risk-sensitive optimization setting is that $\textrm{CVaR}$ is a coherent risk measure \cite{artzner1999coherent}, while $\textrm{VaR}$ is not, as it violates the sub-additivity assumption. 
	
	Spectral risk measures $(\textrm{SRM})$ are a generalization of CVaR, and are defined as follows:
	\begin{align}
	\mathrm{S}(X) = \int_{0}^{1}\varphi(\beta)\mathrm{V}_{\beta}(X)\,d\beta.\label{def:srm}
	\end{align}
	In the equation above, $\varphi(\cdot)$ is a risk-aversion function, which can be chosen to ensure that SRM is a coherent risk measure \cite{acerbi2002spectral} and $V_\beta$ is the $\beta$-quantile of the distribution of the r.v. $X$.
	In particular, a non-negative, increasing $\varphi$ that integrates to $1$ is sufficient for ensuring coherence. 
	SRM can be seen as a weighted average of the quantiles (VaR) of the underlying distribution. Moreover, CVaR can be recovered by setting $\varphi(\beta) = \frac{1}{1-\alpha}\indic{\beta>\alpha},\, \alpha \in (0,1)$. The latter choice translates to an equal weight for all tail-loss VaR values. In contrast, SRM can model a user's risk aversion better, since the function $\varphi$ can be chosen such that higher losses receive a higher weight, or at least, the same weight as lower losses \cite{dowd2006after}.
	
	In this paper, we consider the problem of estimating SRM  of a random variable (r.v.), given independent and identically distributed (i.i.d.) samples from the underlying distribution. 
	In this context, our contributions are as follows:
	First, we provide a natural estimation scheme for SRM that uses the empirical distribution function (EDF) to estimate quantiles, together with a trapezoidal rule-based approximation to the integral in \eqref{def:srm}. 
	Second, we provide concentration bounds for our proposed SRM estimate for the following two cases: first, when the underlying distribution is assumed to have bounded support; and second, when the distribution is either Gaussian or exponential. To the best of our knowledge, no concentration bounds are available for SRM estimation.
	Third,
	we perform simulation experiments to show the efficacy of our proposed SRM estimation scheme. In particular, we consider a synthetic setup, and show that our  scheme provides accurate estimates of SRM. Next, we incorporate our SRM estimation scheme in the inner loop of the successive rejects (SR) algorithm \cite{audibert2010best}, which is a popular algorithm in the best arm identification framework for multi-armed bandits. We test the resulting SR algorithm variant in a vehicular traffic routing application using SUMO traffic simulator \cite{behrisch2011sumo}. The application is motivated by the fact that, in practice, human road users may not always prefer the route with lowest mean delay. Instead, a route that minimized worst-case delay, while doing reasonably well on the average, is preferable, and such a preference can be encoded into the risk aversion function $\varphi(\cdot)$ in \eqref{def:srm}.
	
	To the best of our knowledge, concentration bounds are not available for the SRM estimation. However, the bounds that we derive for SRM estimation could be specialized to the case of CVaR. 
	In \citep{brown2007large,wang2010deviation} concentration bounds for the classic CVaR estimator are derived. Our bound, using a different estimator, exhibits a similar rate of exponential convergence around true CVaR. For the case of distributions with unbounded support, concentration bounds for empirical CVaR have been derived recently in \citep{kolla2019concentration,kolla2019cvarBandits,bhat2019improved}. In \citep{kolla2019concentration} (resp. \citep{kolla2019cvarBandits,bhat2019improved}), the authors derive an one-sided concentration bound (resp. two-sided bounds), when the underlying distributions are either sub-Gaussian or sub-exponential \citep{wainwright2019high}.  In comparison to \citep{kolla2019concentration}, we derive two-sided concentration bounds for the special case of Gaussian and exponential distributions. Moreover, the bounds that we derive show an improved dependence on the number of samples, say $n$, and accuracy, say $\epsilon$, when compared to the corresponding bounds in \citep{kolla2019cvarBandits}. More precisely, the probability that the CVaR estimate is more than an $\epsilon$ away from the true CVaR is bounded above by $c_1 \exp\left(-c_2 n \epsilon^2\right)$, for large enough $n$ and some universal constants $c_1,c_2$, in our bound. On the other hand, the corresponding tail bound in \citep{kolla2019cvarBandits} is $c_1 \exp\left(-c_2 \sqrt{n} \min(\epsilon,\epsilon^2)\right)$ for the sub-Gaussian case, and $c_1 \exp\left(-c_2 n^{1/3} \min(\epsilon^{2/3},\epsilon^2)\right)$ for the sub-exponential case. Finally, in comparison to a recent result in \cite{bhat2019improved}, our bound exhibits exponential concentration
	, while the corresponding bound in \citep{bhat2019improved} shows a polynomial decay for $\epsilon>1$. 
	CVaR-based models have been explored in different contexts, for instance, in a bandit application  \citep{galichet2013exploration}, in a portfolio optimization problem \citep{krokhmal2002portfolio}, and in a general risk management setting \citep{mulvey2006applying}. In the simulation experiments, we consider each of these applications, and show the efficacy of our proposed estimation scheme in each application context.
	
	The rest of the paper is organized as follows: Section \ref{sec:background} introduces $\textrm{VaR}$, $\textrm{CVaR}$, $\textrm{SRM}$ and their estimators from i.i.d. samples,
	Section \ref{sec:srm-est-bdd} presents our estimate of $\textrm{SRM}$, together with concentration bounds for the case when the underlying distribution has bounded support. Section \ref{sec:srm-est-unbdd} presents a truncated SRM estimation scheme, and concentration bounds for the case when the underlying distribution is either Gaussian or exponential.  Section \ref{sec:expts} presents the simulation experiments, Section \ref{sec:proofs} provides the proofs of the concentration bounds in Sections \ref{sec:srm-est-bdd}--\ref{sec:srm-est-unbdd}, and finally, Section~\ref{sec:conclusions} concludes the paper.
	
	\section{Preliminaries}
	\label{sec:background}
	For a r.v. $X$,  VaR $\mathrm{V}_\beta(X)$ and CVaR $\mathrm{C}_\beta(X)$ at the level $\beta$, $\beta \in (0,1)$, are defined as follows:
	\begin{align}
	\mathrm{V}_\beta(X) := \inf \{c: P(X\le c)\ge \beta \}, \quad 
	\mathrm{C}_\beta(X) & := \mathrm{V}_\beta(X) \,\, + \frac{1}{1-\beta}\mathbb{E}[X - \mathrm{V}_\beta(X)]^{+}, \label{def:var-cvar}
	\end{align}
	where $[x]^{+} = \max(0,x)$ for a real number $x$.
	$\mathrm{V}_\beta(X)$ can be interpreted as the minimum loss that will not be exceeded with probability $\beta$. Note that, if $X$ has a continuous and strictly increasing cumulative distribution function (CDF) $F$, then $\mathrm{V}_\beta(X)$ is a
	solution to the following:
	\[	\mathbb{P}[X\le\xi] = \beta, \,\, \text{i.e., } \mathrm{V}_\beta(X) = F^{-1}(\beta).\]
	Further,
	$\mathrm{C}_\beta(X)$ can be interpreted as the expected loss, conditional on the event that the loss exceeds $\mathrm{V}_\beta(X)$, i.e., 
	$\mathrm{C}_{\beta}(X) = \mathbb{E}[X|X\ge \mathrm{V}_\beta(X)].$
	
	Let $X_i$, $i = 1, \ldots , n$ denote i.i.d. samples from the distribution of $X$. Then, the estimate of $\mathrm{V}_\beta(X)$, denoted by $\widehat{\mathrm{V}}_{n, \beta}$, is formed as follows \cite{serfling2009approximation}:
	\begin{align}
	\widehat{\mathrm{V}}_{n,\beta} &= \widehat{F}_{n}^{-1}(\beta)  = \inf\{x:\widehat{F}_n(x)\ge \beta\} \label{eq:var-est},
	\end{align}
	where $\widehat{F}_{n}(x) = \frac{1}{n}\sum_{i=1}^{n}\mathbb{I}[{X_i \le x}]$ is the EDF of $X$. Letting $X_{(1)}, \ldots, X_{(n)}$ denote the order statistics, i.e., $X_{(1)} \le X_{(2)} \le \dots \le X_{(n)}$, we have $\widehat{\mathrm{V}}_{n,\beta} = X_{(\lceil n\beta \rceil)}$.
	
	
	\section{Distributions with bounded support}
	\label{sec:srm-est-bdd}
	\subsection{Estimation scheme}
	We estimate $\mathrm{S}(X)$, given i.i.d. samples $X_1, \dots, X_n$ from the distribution of $X$, by approximating the integral in SRM definition \eqref{def:srm}. 
	Notice that the integrand $V_\beta(X)$ in \eqref{def:srm} has to be estimated using the samples. 
	Recall that $\widehat{\mathrm{V}}_{n,\beta}$ is the estimate of $\mathrm{V}_{\beta}(X)$, given by \eqref{eq:var-est}. 
	We use the weighted VaR estimate to form a discrete sum to approximate the integral, an idea motivated by the trapezoidal rule \citep{cruz2003elementary}.
	The estimate $\widehat{\mathrm{S}}_{n, m}$ of $\mathrm{S}(X)$ is formed as follows:
	\begin{equation}
	\label{eq:srm-est}
	\widehat{\mathrm{S}}_{\,n,m} = \sum_{k = 1}^{m}\frac{\varphi(\beta_{k-1})\widehat{\mathrm{V}}_{n,\beta_{k-1}} + \varphi(\beta_{k})\widehat{\mathrm{V}}_{n,\beta_{k}}}{2}\Delta\beta.
	\end{equation}
	In the equation above, $\{\beta_k\}_{k = 0}^{m}$ is a partition of $[0, 1]$ such that $\beta_0=0$ and $\beta_{k} = \beta_{k-1} + \Delta\beta$, where $\Delta\beta = 1/m$ is the length of each sub-interval. 
	
	In the next section, we present concentration bounds for the estimator presented above, assuming that the underlying distribution has bounded support.
	
	\subsection{Concentration bounds}
	\label{sec:conc-bounds}
	For notational convenience, we shall use ${\mathrm{V}}_\beta$ and $\mathrm{S}$ to denote ${\mathrm{V}}_{\beta}(X)$ and $\mathrm{S}(X)$, for any $\beta\in (0,1)$. 
	
	For all the results presented below, we let $\widehat{ \mathrm{S}}_{\,n,m}$ denote the SRM estimate formed from $n$ i.i.d. samples of $X$ and with m sub-intervals, using \eqref{eq:srm-est}.     
	Let $F$ and $f$ denote the distribution and density of $X$, respectively. 
	
	For the sake of analysis, we make one of the following assumptions:\\[0.5ex]
	\noindent \textbf{(A1)} Let $\varphi(\beta)$ be a risk-aversion function such that $\mid\varphi(\beta)\mid\le C_{1}$ and $\mid{\varphi}^\prime(\beta)\mid\le C_{2},  \,\, \forall \beta \in [0,1]$.\\[0.5ex]
	\noindent\textbf{(A1$^{\prime}$)} The conditions of (A1) hold. In addition,  $\mid{\varphi}^{\prime\prime}(\beta)\mid\le C_{3}, \,\,\forall \beta \in [0,1]$.
	
	\begin{theorem}[\textbf{SRM concentration: bounded case}]
		\label{thm:srm-conc-bdd}
		Let the r.v. $X$ be continuous and $X \le B$ a.s. Fix $\epsilon>0$.
		
		\noindent \textbf{(i)} Assume (A1) holds and $f(x) \ge 1/\delta_1 > 0 $,
		$x \le B$.
		If $\mid B\, C_{2}  + \delta_1\, C_{1}\mid \le K_1 $, and $m \ge \frac{K_1}{2\epsilon}$, then 
		\begin{align}
		&\mathbb{P}\left( \left| \mathrm{S} - \widehat{\mathrm{S}}_{\,n,m} \right|>\epsilon\right) \le
		\frac{2 K_1}{ \epsilon}\exp\left(\frac{ -n\,c\,\epsilon^2}{2C^2_{1}}\right),\label{eq:cvar-bd-1}
		\end{align}
		where $c = \min\{c_0, c_1, \dots , c_m\}$ and $c_k, k \in \{0, \dots, m\}$, is a constant that depends on the value of the density $f$ of the r.v. $X$ in a neighborhood of $\textrm{V}_{\beta_{k}}$, with $\beta_k$ as in \eqref{eq:srm-est}.\\\\
		\noindent \textbf{(ii)} Assume (A1$^{\prime}$) holds and $\mid f^{'}(x)\mid/f(x)^3 \le \delta_2$,
		$ x \le B$.
		If $\mid B\, C_{3}  + 2\,\delta_1\, C_{2}  + \delta_2\, C_{1}\mid \le K_2 $, and $m \ge \sqrt{\frac{K_2}{6\epsilon}}$, then 
		\begin{align}
		&{\mathbb{P}}\left( \left| \mathrm{S} - \widehat{\mathrm{S}}_{\,n,m} \right|>\epsilon\right)\le \sqrt{\frac{8K_2}{3\epsilon}}\exp\left(\frac{ -n\,c\,\epsilon^2}{2C^2_{1}}\right),\label{eq:cvar-bd-2} \end{align}
		where $c$ is as in the case above.
	\end{theorem}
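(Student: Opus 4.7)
The plan is to split the total error into a deterministic discretization error and a random estimation error by introducing the intermediate quantity $\mathrm{S}_m := \sum_{k=1}^{m}\frac{\varphi(\beta_{k-1})\mathrm{V}_{\beta_{k-1}} + \varphi(\beta_{k})\mathrm{V}_{\beta_{k}}}{2}\Delta\beta$, i.e., the trapezoidal rule applied to the \emph{true} integrand. By the triangle inequality, $|\mathrm{S} - \widehat{\mathrm{S}}_{\,n,m}| \le |\mathrm{S} - \mathrm{S}_m| + |\mathrm{S}_m - \widehat{\mathrm{S}}_{\,n,m}|$; I will pick $m$ so that the first (deterministic) term is at most $\epsilon/2$, and then bound the probability that the second (random) term exceeds $\epsilon/2$.

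For the discretization, set $g(\beta) := \varphi(\beta)\mathrm{V}_{\beta}$. Implicit differentiation of $F(\mathrm{V}_\beta)=\beta$ gives $\mathrm{V}_\beta' = 1/f(\mathrm{V}_\beta)$ and $\mathrm{V}_\beta'' = -f'(\mathrm{V}_\beta)/f(\mathrm{V}_\beta)^3$. Combining the hypotheses on $\varphi$ with $\mathrm{V}_\beta\le B$ and the density assumptions $f\ge 1/\delta_1$ (in (i)) and $|f'|/f^3\le \delta_2$ (in (ii)) gives $\|g'\|_\infty \le BC_2 + \delta_1 C_1 \le K_1$ in case (i), and additionally $\|g''\|_\infty \le BC_3 + 2\delta_1 C_2 + \delta_2 C_1 \le K_2$ in case (ii). Invoking the standard composite trapezoidal-rule error bounds, namely $|\mathrm{S}-\mathrm{S}_m| \le K_1/(4m)$ in the $C^1$ case and $|\mathrm{S}-\mathrm{S}_m|\le K_2/(12 m^2)$ in the $C^2$ case, and forcing each right-hand side to be at most $\epsilon/2$ recovers exactly the thresholds $m \ge K_1/(2\epsilon)$ and $m\ge\sqrt{K_2/(6\epsilon)}$ stated in the theorem.

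For the estimation error, $|\varphi|\le C_1$ together with the fact that the trapezoidal weights sum to one gives
\[ |\mathrm{S}_m - \widehat{\mathrm{S}}_{\,n,m}| \;\le\; C_1\, \max_{0\le k\le m}|\mathrm{V}_{\beta_k} - \widehat{\mathrm{V}}_{n,\beta_k}|, \]
so that a union bound over the $m+1$ nodes yields
\[ \mathbb{P}(|\mathrm{S}_m - \widehat{\mathrm{S}}_{\,n,m}|>\epsilon/2) \;\le\; (m+1)\,\max_{0\le k\le m}\mathbb{P}(|\widehat{\mathrm{V}}_{n,\beta_k} - \mathrm{V}_{\beta_k}| > \epsilon/(2C_1)). \]
The remaining ingredient is a quantile concentration inequality, derived as follows: the event $\widehat{\mathrm{V}}_{n,\beta_k} > \mathrm{V}_{\beta_k}+t$ is equivalent to $\widehat F_n(\mathrm{V}_{\beta_k}+t) < \beta_k$, and the density lower bound converts this into a deviation $|\widehat F_n(\mathrm{V}_{\beta_k}+t) - F(\mathrm{V}_{\beta_k}+t)|$ of order $t/\delta_1$; Hoeffding's inequality applied to the Bernoulli indicators defining $\widehat F_n$ then gives $\mathbb{P}(|\widehat{\mathrm{V}}_{n,\beta_k} - \mathrm{V}_{\beta_k}|>t)\le 2\exp(-n c_k t^2)$, with $c_k$ depending on $f$ in a neighborhood of $\mathrm{V}_{\beta_k}$. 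Substituting $t = \epsilon/(2C_1)$, using $m+1 = O(K_1/\epsilon)$ in (i) and $O(\sqrt{K_2/\epsilon})$ in (ii), and setting $c = \min_k c_k$ produces \eqref{eq:cvar-bd-1} and \eqref{eq:cvar-bd-2} respectively.

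The main obstacle I anticipate is the quantile concentration step: I must check that the one-sided density lower bound is enough to convert a CDF deviation at $\mathrm{V}_{\beta_k}\pm t$ into a two-sided quantile deviation with a clean rate, and then organize the density-dependent constants $c_k$ so that the exponent in the final bound is exactly $n c \epsilon^2/(2C_1^2)$. Matching the prefactors $2K_1/\epsilon$ and $\sqrt{8K_2/(3\epsilon)}$ precisely is a matter of careful bookkeeping in the union bound and in how the $\epsilon$ budget is split between the two sources of error.
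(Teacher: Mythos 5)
Your proposal is correct and follows essentially the same route as the paper: the same decomposition into a deterministic trapezoidal error (controlled via bounds on $(\varphi(\beta)\mathrm{V}_\beta)'$ and $(\varphi(\beta)\mathrm{V}_\beta)''$ derived from $\mathrm{V}_\beta'=1/f(\mathrm{V}_\beta)$ and $\mathrm{V}_\beta''=-f'(\mathrm{V}_\beta)/f(\mathrm{V}_\beta)^3$, matching the paper's Lemma \ref{lemma:trap_error}) plus a union bound over the $m+1$ nodes combined with a per-node VaR concentration inequality (the paper's Lemma \ref{Var_conc}, cited from prior work rather than rederived). Your use of ``the trapezoidal weights sum to one'' to reduce to $C_1\max_k|\mathrm{V}_{\beta_k}-\widehat{\mathrm{V}}_{n,\beta_k}|$ is a slightly cleaner bookkeeping than the paper's term-by-term budget split, but the argument is the same in substance.
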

	\begin{proof}
		See Section \ref{sec:proof-srm-conc-bdd}.
	\end{proof}	
	For small values of $\epsilon$, the bound in \eqref{eq:cvar-bd-2} is better than that in \eqref{eq:cvar-bd-1}. However, the bound in \eqref{eq:cvar-bd-1} is derived under weaker assumptions on the r.v. $X$ and the risk-aversion function $\varphi$, as compared to the bound in \eqref{eq:cvar-bd-2}.
	
	In part (i) of the theorem above, we assumed that the density $f$ of $X$ is bounded below by $\frac{1}{\delta_1} > 0$. This implies that the derivative of \textrm{VaR} is bounded above. The latter condition is required for the trapezoidal rule to provide a good approx to the integral in \eqref{def:srm}. Moreover,
	the assumption that the first derivative of VaR w.r.t the confidence level $\beta$ is bounded implies that the underlying r.v. $X$ is bounded. 
	This claim can be made precise as follows:
	For any $\beta \in (0,1)$, it can be shown that (see Lemma \ref{lemma:var-derivatives} in the Appendix for a proof) 
	\begin{align}
	\mathrm{V}_\beta^{'} &=\frac{1}{f\left(\mathrm{V}_\beta\right)}, \textrm{ and }
	\mathrm{V}_\beta^{''} = -\frac{f^{'}\left(\mathrm{V}_\beta\right)}{{ f\left(\mathrm{V}_\beta\right)}^3} \label{eq:r067}
	\end{align}
	Notice that the first derivative of VaR involves a $1/f$ term, and  if the random variable is unbounded, then for every $\epsilon >0$, there is an $x$ such that $0<f(x)<\epsilon$. This implies $1/f$ cannot be bounded above uniformly w.r.t $x$, and hence the derivative of VaR cannot be bounded either.
	
	The stronger condition $\mid f^{'}(x)\mid/f(x)^3 \le \delta_2$ used in part (ii) of Theorem \ref{thm:srm-conc-bdd}, in conjunction with \eqref{eq:r067}, implies that the second derivative of \textrm{VaR} is bounded. Now, as before, a bounded second derivative implies that the underlying r.v. $X$ is bounded. To see this, the expression for the second derivative of VaR involves a $f'/f^3$ term, and if the r.v. $X$ is unbounded, then a uniform bound on $f'/f^3$ would mean that, as $x \rightarrow \infty$, $f$ decays too slowly to integrate to something finite, leading to a contradiction. More precisely, the differential inequality $|f'|/f^3 < K$ can be ``solved'' to get $f(x)>C/\sqrt{a+bx}$ for large $x$ and suitable constants $a$, $b$, and $C$. However, the expression on the RHS integrates to infinity, and hence, no density $f$ with unbounded support can have $f'/f^3$ bounded.
	
	\section{Gaussian and exponential distributions}
	\label{sec:srm-est-unbdd}
	\subsection{Estimation scheme}
	Let $X_1, \dots, X_n$ denote i.i.d. samples from the distribution of $X$. We form a truncated set of samples as follows:
	\begin{align*}
	\bar{X}_{i} = X_{i}\indic{X_{i}\le B_{n}}, 	\end{align*}  where  
	$B_{n}$ is a truncation threshold that depends on the underlying distribution.
	For the case of Gaussian distribution with mean zero and variance ${\sigma}^2$, we set $B_{n} = \sqrt{  2 \sigma^2\log{\left(n\right)}}$, and for the case of exponential distribution with mean $1/\lambda$, we set $B_{n} = \frac{\log(n)}{\lambda}$.
	
	We form an SRM estimate along the lines of \eqref{eq:srm-est}, except that the samples used are truncated samples, i.e., 
	\begin{equation}
	\label{eq:srm-est-trunc}
	\widetilde{\mathrm{S}}_{\,n,m} = \sum_{k = 1}^{m}\frac{\varphi(\beta_{k-1})\widetilde{\mathrm{V}}_{n,\beta_{k-1}} + \varphi(\beta_{k})\widetilde{\mathrm{V}}_{n,\beta_{k}}}{2}\Delta\beta.
	\end{equation}
	In the above, $\widetilde{\mathrm{V}}_{n,\beta} = \widetilde F_n^{-1}(\beta)$, with $\widetilde{F}_{n}(x) = \frac{1}{n}\sum_{i=1}^{n}\mathbb{I}[{\bar X_i \le x}]$.  
	
	\subsection{Concentration bounds}
	Next, we present  concentration bounds for our SRM estimator assuming that the samples are either from a Gaussian distribution with mean zero and variance $\sigma^2$, or from the exponential distribution with mean $1/\lambda$. Note that the estimation scheme is not provided this information about the underlying distribution. Instead $\widetilde{\textrm{S}}_{n,m}$ is formed from $n$ i.i.d. samples and with $m$ sub-intervals, using \eqref{eq:srm-est-trunc}. 
	
	\begin{theorem}[\textbf{SRM concentration: Gaussian case}]
		\label{thm:srm-conc-gauss}
		Assume (A1). Suppose that the r.v. $X$ is Gaussian with mean zero and variance $\sigma^2$, with $\sigma \le \sigma_{\textrm{max}}$. Fix $\epsilon>0$. 
		If $m \ge  
		\frac{1}{5}\sqrt{\frac{\sigma_{\textrm{max}}}{\epsilon}}\exp{\left(\frac{nc{\epsilon}^2}{4C^2_{1}}\right)}$, then
		\begin{align*}
		&\mathbb{P}\left[ \left| \mathrm{S} - {\widetilde{\mathrm{S}}}_{\,n,m} \right|>\epsilon\right]
		\le\frac{2\sigma\left(\sqrt{2\log{\left(n\right)}}\, C_{2}  + \sqrt{2\pi} n\, C_{1} \right)}{\left(\epsilon -\frac{2\sigma C_{1} }{\sqrt{n}}\right)}\exp{\left(-\frac{{nc{\left(\epsilon -\frac{2\sigma C_{1} }{\sqrt{n}}\right)}^2}}{{2C^2_{1}}}\right)}			
		, \forall \epsilon > \frac{2\sigma C_{1}}{\sqrt{n}}.
		\end{align*}
		where $c$ is as in Theorem \ref{thm:srm-conc-bdd} (i).  
	\end{theorem}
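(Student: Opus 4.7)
The plan is to reduce the Gaussian case to (an adaptation of) Theorem~\ref{thm:srm-conc-bdd}(i), applied to the truncated variable $\bar X := X\,\mathbb{I}\{X \le B_n\}$, where $B_n = \sigma\sqrt{2\log n}$. Writing $\bar V_\beta$ for the $\beta$-quantile of $\bar X$ and $\bar{\mathrm{S}} := \int_0^1 \varphi(\beta)\,\bar V_\beta\,d\beta$ for the corresponding ``truncated SRM'', I would begin with the triangle inequality
\[
|\mathrm{S}-\widetilde{\mathrm{S}}_{n,m}| \;\le\; |\mathrm{S}-\bar{\mathrm{S}}| \;+\; |\bar{\mathrm{S}}-\widetilde{\mathrm{S}}_{n,m}|,
\]
which separates the error into a deterministic truncation bias and a stochastic estimation error on the truncated samples.

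For the bias, the Gaussian tail bound $\mathbb{P}(X > B_n) \le \exp(-B_n^2/(2\sigma^2)) = 1/n$, combined with a short tail-integral estimate (for example via a Cauchy--Schwarz comparison between the $L^1$ and $L^2$ Wasserstein distances of $X$ and $\bar X$), gives $|\mathrm{S}-\bar{\mathrm{S}}| \le 2\sigma C_1/\sqrt n$, which is precisely the $\epsilon$-shift appearing in the conclusion. For the stochastic term, the key observation is that $\bar X \le B_n$ almost surely and its density on the relevant range is bounded below by $f(B_n) = 1/(\sigma\sqrt{2\pi}\,n)$. Feeding $B = B_n = \sigma\sqrt{2\log n}$ and $\delta_1 = \sigma\sqrt{2\pi}\,n$ into the bounded-support analysis of Theorem~\ref{thm:srm-conc-bdd}(i) produces the effective constant
\[
K_1 \;=\; B\,C_2 + \delta_1\,C_1 \;=\; \sigma\sqrt{2\log n}\,C_2 + \sigma\sqrt{2\pi}\,n\,C_1,
\]
which one recognises as the numerator of the prefactor in the stated bound. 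Running that same argument at accuracy $\epsilon' := \epsilon - 2\sigma C_1/\sqrt n$ --- the partition-size requirement $m \ge K_1/(2\epsilon')$ being subsumed by the hypothesis on $m$ in the statement, whose exponential-in-$n$ factor ensures the quadrature error is dominated by the concentration tail --- yields
\[
\mathbb{P}\bigl(|\bar{\mathrm{S}}-\widetilde{\mathrm{S}}_{n,m}| > \epsilon'\bigr) \;\le\; \frac{2K_1}{\epsilon'}\exp\!\Bigl(-\frac{n\,c\,(\epsilon')^2}{2C_1^2}\Bigr),
\]
which becomes the stated inequality upon substituting for $\epsilon'$.

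The main obstacle will be that $\bar X$ is not a continuous random variable --- it places a point mass at $0$ of size $\mathbb{P}(X > B_n)$ --- so Theorem~\ref{thm:srm-conc-bdd}(i) does not apply verbatim. The hard part will therefore be reinspecting that proof and verifying that its only quantitative ingredients (a uniform density lower bound near each partition quantile $\bar V_{\beta_k}$, a uniform upper bound on $\bar V_\beta$, and Chernoff-type concentration of the order statistics of the truncated sample) remain valid once $n$ is large enough that the atom at $0$ lies safely between partition points. A secondary subtlety will be producing the truncation bias in the clean form $2\sigma C_1/\sqrt n$, rather than as a cruder $n$-dependent constant, so that the $\epsilon$-shift in the final bound comes out exactly as claimed.
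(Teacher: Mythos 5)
Your proposal follows essentially the same route as the paper's proof: decompose the error into a deterministic truncation bias (which the paper writes as the tail integral $I_2=\int_\eta^1\varphi(\beta)\mathrm{V}_\beta\,d\beta$ and bounds by $2\sigma C_1/\sqrt{n}$ via the Gaussian tail) plus a stochastic term handled by applying Theorem~\ref{thm:srm-conc-bdd}(i) to the truncated variable with $B=B_n=\sigma\sqrt{2\log n}$ and $\delta_1=\sigma\sqrt{2\pi}\,n$, then shift the accuracy to $\epsilon-2\sigma C_1/\sqrt{n}$. The obstacle you flag --- that the truncated variable has an atom and is not continuous, so Theorem~\ref{thm:srm-conc-bdd}(i) does not apply verbatim --- is real, but the paper's own proof invokes that theorem on $Z=X\indic{X\le B_n}$ without addressing it either.
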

	\begin{proof}
		See Section \ref{sec:proof-srm-conc-gauss}.
	\end{proof}	
	
	\begin{theorem}[\textbf{SRM concentration: Exponential case}]
		\label{thm:srm-conc-exp}
		Assume (A1). Suppose that the r.v. $X$ is exponentially distribution with parameter $\lambda$, and $0 < \lambda_{\textrm{min}} \le \lambda $. Fix $\epsilon>0$.
		If $m \ge 
		\frac{1}{8}\sqrt{\frac{1}{\lambda_{\textrm{min}} \epsilon}}\exp{\left(\frac{nc{\epsilon}^2}{4C^2_{1}}\right)}
		$, then
		\begin{align*}
		&\mathbb{P}\left[ \left| \mathrm{S} - {\widetilde{\mathrm{S}}}_{\,n,m} \right|>\epsilon\right]
		\le
		\frac{2\left(\frac{log{\left(n\right)\, C_{2} }}{\lambda} + n\, C_{1}\right)}{\left(\epsilon -\frac{ C_{1} (n+1)}{\lambda n}\right)}\exp{\left(-\frac{{nc{\left(\epsilon -\frac{ C_{1} (n+1)}{\lambda n}\right)}^2}}{{2C^2_{1}}}\right)}, \forall \epsilon > \frac{ C_{1} (n+1)}{\lambda n}.
		\end{align*}
		where $c$ is as in Theorem \ref{thm:srm-conc-bdd} (i).
	\end{theorem}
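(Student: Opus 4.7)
The plan is to mirror the three-step strategy used for Theorem~\ref{thm:srm-conc-gauss}, with exponential-specific calculations replacing the Gaussian ones. Let $\bar{X} = X\,\mathbb{I}\{X \le B_n\}$ with $B_n = \log(n)/\lambda$, and write $\bar{\mathrm{S}} = \mathrm{S}(\bar{X}) = \int_0^1 \varphi(\beta)\mathrm{V}_\beta(\bar{X})\,d\beta$ for the SRM of the truncated r.v. By the triangle inequality
\[
|\widetilde{\mathrm{S}}_{\,n,m} - \mathrm{S}| \;\le\; |\widetilde{\mathrm{S}}_{\,n,m} - \bar{\mathrm{S}}| \;+\; |\bar{\mathrm{S}} - \mathrm{S}|,
\]
so the event $\{|\widetilde{\mathrm{S}}_{\,n,m} - \mathrm{S}| > \epsilon\}$ is contained in the union of a truncation-bias event and an event on which the bounded-support estimator deviates by at least $\epsilon' := \epsilon - C_1(n+1)/(\lambda n)$.

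For the truncation bias, I would use the closed form $\mathrm{V}_\beta(X) = -\log(1-\beta)/\lambda$ together with the inverse CDF of $\bar{X}$, which is $0$ on $[0,1/n]$ (due to the atom of mass $1/n$ at $0$ created by the truncation) and $-\log(1-\beta+1/n)/\lambda$ on $(1/n,1)$. Since $\mathrm{V}_\beta(X) \ge \mathrm{V}_\beta(\bar{X})$ for every $\beta$ and $\varphi \le C_1$,
\[
|\bar{\mathrm{S}} - \mathrm{S}| \;\le\; C_1\!\int_0^1 \bigl(\mathrm{V}_\beta(X) - \mathrm{V}_\beta(\bar{X})\bigr)d\beta \;=\; C_1\,\bigl(\mathbb{E}[X] - \mathbb{E}[\bar{X}]\bigr) \;=\; C_1\,\mathbb{E}[X\,\mathbb{I}\{X>B_n\}],
\]
and a direct integration-by-parts yields $\mathbb{E}[X\,\mathbb{I}\{X>B_n\}] = e^{-\lambda B_n}(B_n + 1/\lambda) = (1+\log n)/(\lambda n)$, which is absorbed by the stated (looser) $C_1(n+1)/(\lambda n)$.

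For the concentration of $\widetilde{\mathrm{S}}_{\,n,m}$ around $\bar{\mathrm{S}}$, the truncated samples lie in $[0,B_n]$ and the exponential density satisfies $f(x) = \lambda e^{-\lambda x} \ge \lambda e^{-\lambda B_n} = \lambda/n$ on $[0,B_n]$. Hence Theorem~\ref{thm:srm-conc-bdd}(i) applies with $B = B_n$, $\delta_1 = n/\lambda$, and $K_1 = B_n C_2 + \delta_1 C_1 = \log(n)C_2/\lambda + nC_1/\lambda$, yielding
\[
\mathbb{P}\bigl(|\widetilde{\mathrm{S}}_{\,n,m} - \bar{\mathrm{S}}| > \epsilon'\bigr) \;\le\; \frac{2K_1}{\epsilon'}\exp\!\left(-\frac{nc(\epsilon')^2}{2C_1^2}\right),
\]
provided $m \ge K_1/(2\epsilon')$. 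Substituting $\epsilon' = \epsilon - C_1(n+1)/(\lambda n)$ and combining with Step~1 produces the claimed tail bound; the stated lower bound on $m$ is what the constraint $m \ge K_1/(2\epsilon')$ reduces to after inserting the explicit exponential factor and using $\lambda \ge \lambda_{\min}$.

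The main obstacle is the atom at $0$ in $\bar{X}$, which prevents a textbook application of Theorem~\ref{thm:srm-conc-bdd}(i) to the distribution of $\bar{X}$ itself (its density is degenerate on $\{0\}$). The way around is to apply Theorem~\ref{thm:srm-conc-bdd}(i) using the restriction of the density to $(0,B_n]$, where the lower bound $f \ge \lambda/n$ is genuine, and to verify that the order-statistics argument underlying Theorem~\ref{thm:srm-conc-bdd} is unaffected by the atom because any quantile $\beta_k \le 1/n$ forces $\widetilde{\mathrm{V}}_{n,\beta_k} \to \mathrm{V}_{\beta_k}(\bar{X}) = 0$ on large-sample events and contributes no extra deviation. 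A secondary annoyance is that $\delta_1 = n/\lambda$ grows with $n$, which is exactly the source of the relatively large additive gap $C_1(n+1)/(\lambda n)$ required on $\epsilon$, and of the $nC_1$ factor in the prefactor.
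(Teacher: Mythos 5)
Your proposal is correct and follows the same overall strategy as the paper's proof: truncate at $B_n=\log(n)/\lambda$, bound the truncation contribution deterministically by $C_1(n+1)/(\lambda n)$, and invoke Theorem~\ref{thm:srm-conc-bdd}(i) for the truncated samples with $B=B_n$ and density floor $\lambda e^{-\lambda B_n}=\lambda/n$, shifting $\epsilon$ to $\epsilon-C_1(n+1)/(\lambda n)$. The one substantive difference is the intermediate target. The paper splits the integral at $\eta=F(B_n)$ and compares the estimator to $\int_0^\eta\varphi(\beta)\mathrm{V}_\beta(X)\,d\beta$, whereas you compare $\widetilde{\mathrm{S}}_{\,n,m}$ to $\mathrm{S}(\bar X)$ and compute the bias exactly as $C_1\,\mathbb{E}[X\,\mathbb{I}\{X>B_n\}]=C_1(1+\log n)/(\lambda n)$, which is tighter than (and absorbed by) the stated gap. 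Your version is the more careful one: the trapezoidal estimator built from truncated samples genuinely concentrates around $\mathrm{S}(\bar X)$, while the paper's $I_1$ silently ignores the $1/n$ shift between the quantile functions of $X$ and $\bar X$ induced by the atom at $0$ --- an issue you explicitly flag (both arguments still need the workaround you describe, since $\bar X$ is not continuous and Lemma~\ref{Var_conc} is stated for continuous r.v.'s; the paper does not address this at all). Two minor mismatches with the stated theorem: your $\delta_1=n/\lambda$ is the correct reciprocal density floor (the paper's appendix uses $\delta_1=\exp(\lambda B_n)=n$, dropping a factor of $\lambda$), so your prefactor carries $nC_1/\lambda$ where the theorem states $nC_1$, and these agree only for $\lambda\ge 1$; and, like the paper, you do not actually derive the stated condition on $m$ with its $\exp\left(nc\epsilon^2/(4C_1^2)\right)$ factor from the trapezoidal requirement $m\ge K_1/(2\epsilon')$ --- but the paper's own proof is equally silent on that point.
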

	\begin{proof}
		See Section \ref{sec:proof-srm-conc-exp}.
	\end{proof}	
	\begin{remark} 
		Note that concentration bounds for CVaR estimation can be derived using a completely parallel argument to that of the proof of the theorems above, together with following choice for risk aversion function $\varphi(\beta) = 1/(1-\alpha) \indic{\beta>\alpha},\, \alpha \in (0,1)$. The CVaR-specific results are provided in Appendix \ref{sec:cvar-results}.
	\end{remark}
	
	\section{Simulation experiments}
	\label{sec:expts}
	In this section, we demonstrate the efficacy of our proposed method for SRM estimation \eqref{eq:srm-est}, which we shall refer to as SRM-Trapz. In our experiments, we set the risk aversion function as follows: $\varphi(\beta) = \frac{5\,e^{-5(1-\beta)}}{1-e^{-5}}, \beta \in [0,1] $. In the following sub-section, we consider a synthetic experimental setting to compare the accuracy of SRM estimators. Subsequently, we use SRM-Trapz as a subroutine in a vehicular traffic routing application (see section \ref{sec:expts_veh_rout}).
	
	\subsection{Synthetic setup}
	
	Figure \ref{fig: sampleVSerror} presents the estimation error as a function of the sample size for SRM-Trapz. The algorithm is run with two different sub-divisions. The samples are generated using a Gaussian distribution with mean $0.5$ and variance $25$. We observe that $\textrm{SRM-Trapz}$ with $500$ subdivisions performs on par with $\textrm{SRM-Trapz}$ with 150 subdivisions for every sample size. Further, as expected, increasing sample size leads to lower estimation error, while also increasing the confidence (demonstrated by the shrinkage in standard error). 
	
	\begin{figure}[h]
		\begin{minipage}{0.575\textwidth}
			\centerline{\includegraphics[width=\columnwidth]{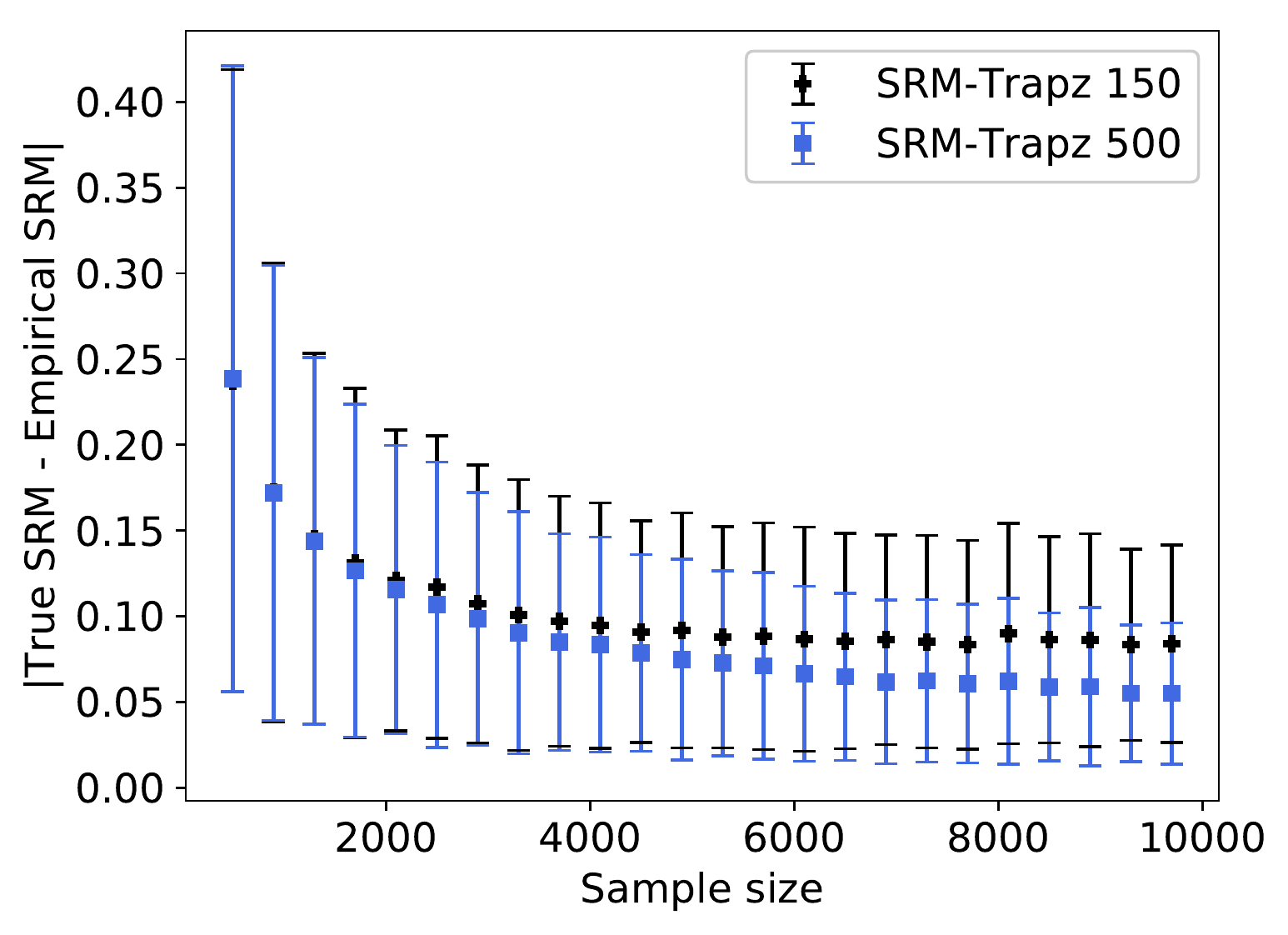}}
			\caption{Error in $\textrm{SRM}$ estimation ($\mid$True SRM - Empirical SRM$\mid$) on different sample size. \textrm{True SRM} is calculated using definition \ref{def:srm}. Empirical SRM is calculated by two methods, (\textbf{i}) $\textrm{SRM-Trapz}$ method with $m = 150$ subdivisions ($\textrm{SRM-Trapz}$ $150$), and (\textbf{ii}) $\textrm{SRM-Trapz}$ method with $m = 500$  subdivisions ($\textrm{SRM-Trapz}$ $500$). In both methods, $\textrm{SRM}$ is estimated using \eqref{eq:srm-est}. The underlying distribution considered for this simulation is $X \sim \mathcal { N } \left( 0.5 , 5 ^ { 2 } \right)$. The bars in the plot shows standard error averaged over $10^3$ iterations.}
			\label{fig: sampleVSerror}
		\end{minipage}
		\hspace{0.025\textwidth}
		\begin{minipage}{0.4\textwidth}
			\captionof{table}{The results for $\textrm{SRM}$ estimation, on four distributions, using two methods. Distributions are $\textbf{(a)}$ Exponential distribution with mean $1/0.2$ $(\text{Exp}(0.2))$, $\textbf{(b)}$ Normal distribution with mean zero and variance ${10}^2$ $(\mathcal{N}(0,10^2))$, $\textbf{(c)}$ Exponential distribution with mean $1/0.01$ $(\text{Exp}(0.01))$, $\textbf{(d)}$ Uniform distribution with range $-10^3$ to $10^3$ $(\text{U}(-{{10}^3}, {10}^3))$. Methods are
				$\textbf{(i)}$ Calculation of $\textrm{SRM}$ $(\textrm{SRM-True})$ using definition \ref{def:srm}, $\textbf{(ii)}$ $\textrm{SRM-Trapz}$ method with $m = 1000$ subdivisions ($\textrm{SRM-Trapz}$ $1000$) using \eqref{eq:srm-est}.
				In method (ii), $10^4$ i.i.d. samples are used for estimating $\textrm{SRM}$ on each distribution, and the standard error is averaged over $10^3$ iterations.}
			\label{Table: distributionVScvar}
			\vskip 0.15in
			\setlength{\tabcolsep}{0.05 cm} 
			\begin{tabular}{lccr}
				\toprule
				Distribution\,\, & SRM-True & SRM 1000 \\
				\midrule
				Exp(0.2)    &10.99 &11.02$\pm$1.21    \\
				N(0, $10^2$)    &107.36 &107.80$\pm$1.32   \\
				Exp(0.01)    &221.30 &221.39$\pm$2.47    \\
				U($-10^3, 10^3$) &612.47 &612.65$\pm$4.91  \\
				\bottomrule
			\end{tabular}
		\end{minipage}
	\end{figure}
	Table \ref{Table: distributionVScvar} presents the results obtained by SRM-Trapz with $1000$ subdivisions, for four different input distributions. We observe that $\textrm{SRM-Trapz}$ is comparable to SRM-True (calculated using definition \ref{def:srm}) under each input distribution.
	
	\subsection{Vehicular traffic routing}
	\label{sec:expts_veh_rout}
	In the vehicular routing application, the traditional objective is to find a route with the lowest expected delay. However, such an objective ignores risk factors. An alternative is to consider the weighted-sum delay of each route, and we use $\textrm{SRM}$ to quantify this objective. Thus, given a set of routes, the aim is to find the route (by adaptive sampling) with the lowest SRM of the delay. Simulation of Urban MObility (SUMO) \citep{behrisch2011sumo} is an open source, highly portable, microscopic road traffic simulation package designed to handle large road networks. Traffic Control Interface (TraCI) \citep{wegener2008traci} is a library, providing extensive commands to control the behavior of the simulation online, including vehicle state, road configuration, and traffic lights. We implement our routing algorithm using SUMO and TRACI.
	
	\begin{figure}[h]
		\begin{minipage}{0.5\textwidth}
			\begin{center}
				\centerline{\includegraphics[width=\columnwidth]{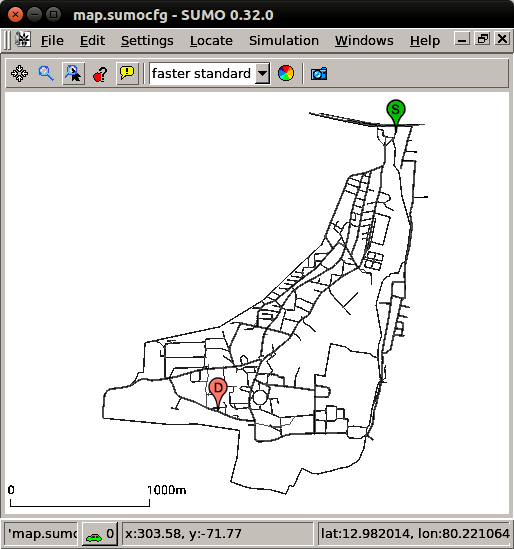}}
				\caption{Area of an urban city map, used for SUMO network.}
				\label{fig: network}
			\end{center}
		\end{minipage}
		\hspace{0.01\textwidth}
		\begin{minipage}{0.49\textwidth}
			\vskip -0.3in
			\begin{algorithm}[H]
				\caption{$\textrm{SRM-SR-Trapz}$}
				\label{alg: successive reject}
				\begin{algorithmic}
					\STATE {\bfseries Input:} number of rounds $n$, number of routes $K$, number of subdivisions $m$.
					\vskip 0.1in
					\STATE Let $A_{1} = \{ 1 , \ldots , K \}, \overline { \log } ( K ) = \frac { 1 } { 2 } + \sum _ { i = 2 } ^ { K } \frac { 1 } { i }, n _ { 0 } = 0 $ and for $ k \in \{ 1 , \ldots , K - 1 \}, n _ { k } = \left[ \frac { 1 } { \overline{\log} ( K ) } \frac { n - K } { K + 1 - k } \right]$
					\vskip 0.1in
					\STATE For each phase $ k = 1, 2, \dots, K - 1 :$
					\STATE (1) For each $ i \in A _ { k },$  select route  $i$  for $n _ { k } - n _ { k - 1 } $ rounds.
					\STATE (2) Let $ A _ { k + 1 } = A _ { k }\backslash\arg \max _ { i \in A _ { k } } \widehat{\textrm{S}}_{\, n _ { k },m,i}$ (we only remove one element from  $A _ { k }$ if there is a tie, select randomly the route to dismiss among the worst routes).
					\vskip 0.1in
					\STATE {\bfseries Output:} Let $i^*$ be the unique element of $A _ { K } $.
					\vskip 0.2in
					\STATE $\widehat{\textrm{S}}_{\,n _ { k },m,i}$ is \textrm{SRM} estimate for $i$th route, using \eqref{eq:srm-est} with $n_k$ samples, and m subdivisions.  
				\end{algorithmic}
			\end{algorithm}
		\end{minipage}
	\end{figure}
	
	For the experiments, we use the street map of the area around IIT Madras, Chennai, India (see Figure \ref{fig: network}) obtained from OpenStreetMap (OSM) \cite{haklay2008openstreetmap}, and then used Netconvert tool to load the map in SUMO. The network has 426 junctions and a total edge length of 123 km. We ran SUMO on this network for $30,000$ time-steps, in which  $7000$ cars, $500$ buses, $2000$ bikes, $1000$ cycles, and $1000$ pedestrians were added at different time-steps and in different lanes uniformly. We choose $K = 5$ routes between two fixed points, marked as S and D in Figure \ref{fig: network}. On these selected routes, we add $n = 1000$ cars and track them. In Table \ref{table: routes avg and srm}, $\widehat X_{n,i}$ is the estimated average delay of the $i$th route, and $\widehat{\textrm{S}}_{\,n,m,i}$ 
	is the SRM estimate for the $i$th route, $i = 1, \dots, K$, using \eqref{eq:srm-est}, and with $n$ samples. We set the number of subdivisions $m = 100$.
	
	\begin{table}[htb!]
		\vskip -0.1in
		\caption{Results for the estimated average delay ($\hat X_{n,i}$) and estimated SRM ($\widehat{\textrm{S}}_{\,n,m,i}$), for $i$th route, where $i = 1, \dots, K$.}
		\label{table: routes avg and srm}
		\begin{center}
			\begin{small}
				\begin{sc}
					\begin{tabular}{lcccccr}
						\toprule
						& route$_1$ & route$_2$ & route$_3$ & route$_4$ & route$_5$ \\
						\midrule
						$\hat X_{n,i}$ \hskip 5pt &283.81 &287.15 &306.80 &266.85 &325.86  \\
						$\widehat{\textrm{S}}_{\,n,m,i}$ &431.28 &361.81 &455.83 &378.68 &390.95 \\ 
						\bottomrule
					\end{tabular}
				\end{sc}
			\end{small}
		\end{center}
		\vskip -0.1in
	\end{table}
	
	From Table \ref{table: routes avg and srm} it is apparent that ROUTE$_4$ has the lowest expected delay, and ROUTE$_2$ has the lowest SRM. We consider a best-arm identification (BAI) bandit framework \cite{audibert2010best}, where an algorithm is given a fixed budget. Here, the budget refers to the total number of samples across routes. After the sampling budget, the algorithm is expected to recommend a route, and is judged by the probability that the recommended route is correct (i.e. the best route). 
	
	We ran successive rejects (SR), which is a popular BAI algorithm, except that SR is modified to find the route with lowest SRM. Note that the regular SR algorithm finds the route with the lowest expected delay.  Algorithm \ref{alg: successive reject} presents the pseudocode for the $\textrm{SRM-SR-Trapz}$ algorithm, with $\textrm{SRM-Trapz}$ used to form SRM estimates for each route. The setting of SUMO is as noted above. We set the budget $n = 1000$, number of routes  $K = 5$, and $m = 100$ subdivisions for $\textrm{SRM-Trapz}$. 
	We observed that Algorithm \ref{alg: successive reject} picks ROUTE$_2$ with probability $0.91$.
	
	\section{Convergence proofs}
	\label{sec:proofs}
	\subsection{Proof of Theorem \ref{thm:srm-conc-bdd}}
	\label{sec:proof-srm-conc-bdd}
	For establishing the bound in Theorem \ref{thm:srm-conc-bdd}, we require a result concerning the error of a trapezoidal-rule-based approximation, and a concentration bound for the $\mathrm{VaR}$ estimate  in \eqref{eq:var-est}. We state these results below, and subsequently provide a proof of Theorem \ref{thm:srm-conc-bdd}.
	\begin{lemma}\label{lemma:trap_error} 
		Let  $0 < a \le b < 1$, and $\{\beta_k\}_{k = 0}^{m}$ be a partition of $[a, b]$ such that $\beta_0 = a$ and $\beta_{k} = \beta_{k-1} + \Delta\beta$, $\Delta\beta = (b - a)/m$ is length of each sub-interval.
		
		\noindent \textbf{(i)} If $\left|\left(\varphi(\beta)V_\beta\right)^{\prime}\right| \le K_1 \,$  for $\beta \in [a,b]$,  then 
		\begin{align}
		\left| \int_a^b\varphi(\beta)\mathrm{V}_{\beta}\,d\beta - \sum_{k = 1}^{m}\frac{ \varphi(\beta_{k-1})\mathrm{V}_{\beta_{k-1}} +  \varphi(\beta_{k})\mathrm{V}_{\beta_{k}}}{2}\Delta\beta \right| \le \frac{K_1(b-a)^2}{4m}. \label{trap_err_1}
		\end{align}
		\noindent \textbf{(ii)} If $\left|\left(\varphi(\beta)V_\beta\right)^{\prime\prime}\right| \le K_2\,$ for $\beta \in [a,b]$, then
		\begin{align}
		\left| \int_a^b\varphi(\beta)\mathrm{V}_{\beta}\,d\beta - \sum_{k = 1}^{m}\frac{ \varphi(\beta_{k-1})\mathrm{V}_{\beta_{k-1}} +  \varphi(\beta_{k})\mathrm{V}_{\beta_{k}}}{2}\Delta\beta \right| \le \frac{K_2(b-a)^3}{12m^2}. \label{trap_err_2}
		\end{align}
		\begin{proof}
			See Appendix \ref{sec:proof_trap_error}.
		\end{proof}	
	\end{lemma}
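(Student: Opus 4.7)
Write $g(\beta) := \varphi(\beta)\mathrm{V}_\beta$ so that the left-hand side of both \eqref{trap_err_1} and \eqref{trap_err_2} is the error of the composite trapezoidal rule applied to $g$ on $[a,b]$. Since $\beta_k - \beta_{k-1} = \Delta\beta = (b-a)/m$ for every $k$, the global error splits as a sum of local errors,
\begin{equation*}
\int_a^b g(\beta)\,d\beta - \sum_{k=1}^{m}\frac{g(\beta_{k-1})+g(\beta_k)}{2}\Delta\beta \;=\; \sum_{k=1}^m E_k, \qquad E_k := \int_{\beta_{k-1}}^{\beta_k} g(\beta)\,d\beta - \frac{g(\beta_{k-1})+g(\beta_k)}{2}\Delta\beta,
\end{equation*}
and it suffices to bound $|E_k|$ uniformly in $k$ and then multiply by $m$. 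This is a completely classical story; the only thing that needs care is the constant in part (i).

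For part (i), the natural idea is to rewrite $E_k$ via one integration by parts so that the first derivative $g'$ appears explicitly. A short computation gives the Peano-kernel form
\begin{equation*}
E_k \;=\; \int_{\beta_{k-1}}^{\beta_k}\!\Bigl[\tfrac{\Delta\beta}{2}-(\beta-\beta_{k-1})\Bigr] g'(\beta)\,d\beta.
\end{equation*}
The kernel is the odd function $\tfrac{\Delta\beta}{2}-(\beta-\beta_{k-1})$, which is bounded by $\Delta\beta/2$ in absolute value and integrates to zero; a direct calculation shows $\int_{\beta_{k-1}}^{\beta_k}\!|\tfrac{\Delta\beta}{2}-(\beta-\beta_{k-1})|\,d\beta = (\Delta\beta)^2/4$. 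Combining this with the hypothesis $|g'|\le K_1$ yields $|E_k|\le K_1(\Delta\beta)^2/4$, and summing over $k$ gives exactly $K_1(b-a)^2/(4m)$.

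For part (ii), I would integrate by parts once more: writing the kernel from part (i) as the derivative of $\tfrac12(\beta-\beta_{k-1})(\beta_k-\beta)$ (which vanishes at both endpoints) produces
\begin{equation*}
E_k \;=\; -\tfrac12\int_{\beta_{k-1}}^{\beta_k}(\beta-\beta_{k-1})(\beta_k-\beta)\,g''(\beta)\,d\beta.
\end{equation*}
Since $(\beta-\beta_{k-1})(\beta_k-\beta)\ge 0$ on $[\beta_{k-1},\beta_k]$ with $\int_{\beta_{k-1}}^{\beta_k}(\beta-\beta_{k-1})(\beta_k-\beta)\,d\beta = (\Delta\beta)^3/6$, the hypothesis $|g''|\le K_2$ gives $|E_k|\le K_2(\Delta\beta)^3/12$. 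Summing over the $m$ subintervals produces $K_2(b-a)^3/(12m^2)$, as claimed.

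There is no real obstacle here: both bounds are textbook consequences of the Peano-kernel representation of the trapezoidal rule. The only subtlety worth flagging is that in part (i), bounding $|g'|$ and $|g(\beta)-\tfrac12(g(\beta_{k-1})+g(\beta_k))|$ naively would give a worse constant than $1/4$; using the zero-mean property of the kernel is what yields the stated $K_1(b-a)^2/(4m)$.
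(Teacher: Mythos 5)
Your proposal is correct and follows essentially the same route as the paper: the paper also derives the per-interval Peano-kernel representation via integration by parts (choosing the constants of integration $A=-h/2$, $B=-h^2/8$ so that the boundary terms reproduce the trapezoid and the kernel vanishes at the endpoints), bounds the $L^1$ norm of the kernel by $(\Delta\beta)^2/4$ resp.\ $(\Delta\beta)^3/12$, and sums over the $m$ subintervals. Your kernels $\tfrac{\Delta\beta}{2}-(\beta-\beta_{k-1})$ and $-\tfrac12(\beta-\beta_{k-1})(\beta_k-\beta)$ coincide with the paper's $-(t-h/2)$ and $\tfrac{(t-h/2)^2}{2}-\tfrac{h^2}{8}$ after the substitution $t=\beta-\beta_{k-1}$, so the two arguments are the same up to presentation.
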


	\begin{lemma}[\textbf{VaR concentration}]\label{Var_conc} 
		Let the r.v. X be continuous.
		Fix $\epsilon >0$, then we have 
		\begin{align*}
		\mathbb{P} \left[\left| \mathrm{V}_{\beta} - \widehat{\mathrm{V}}_{n,\beta}\right|\ge \epsilon\right] \le 2\exp\left(-2n\bar c{\epsilon}^2\right)
		\end{align*}
		where $\bar c$ is a constant that depends on the value of the density $f$ of the r.v. $X$ in a neighborhood of $\mathrm{V}_\beta$.
	\end{lemma}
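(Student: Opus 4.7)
The plan is to reduce the two tail events $\{\widehat{\mathrm{V}}_{n,\beta} \ge \mathrm{V}_\beta + \epsilon\}$ and $\{\widehat{\mathrm{V}}_{n,\beta} \le \mathrm{V}_\beta - \epsilon\}$ to events about the empirical CDF $\widehat{F}_n$, and then apply Hoeffding's inequality to $n\widehat{F}_n(x)$, which is a Binomial$(n,F(x))$ random variable for each fixed $x$. The density lower bound near $\mathrm{V}_\beta$ enters when we convert differences of $F$-values into $\epsilon$-multiples of the density.

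First I would use the definition $\widehat{\mathrm{V}}_{n,\beta}=\inf\{x:\widehat F_n(x)\ge\beta\}$ together with right-continuity of $\widehat F_n$ to rewrite the one-sided tails as
\begin{align*}
\mathbb{P}\bigl[\widehat{\mathrm{V}}_{n,\beta} > \mathrm{V}_\beta+\epsilon\bigr]
&\le \mathbb{P}\bigl[\widehat F_n(\mathrm{V}_\beta+\epsilon)<\beta\bigr],\\
\mathbb{P}\bigl[\widehat{\mathrm{V}}_{n,\beta} < \mathrm{V}_\beta-\epsilon\bigr]
&\le \mathbb{P}\bigl[\widehat F_n(\mathrm{V}_\beta-\epsilon)\ge\beta\bigr].
\end{align*}
Since $X$ is continuous, $F(\mathrm{V}_\beta)=\beta$, so the gap $F(\mathrm{V}_\beta+\epsilon)-\beta$ (respectively $\beta-F(\mathrm{V}_\beta-\epsilon)$) is exactly the deviation between $\widehat F_n$ and $F$ that one needs to trigger the tail event.

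Next I would apply Hoeffding's inequality to the Bernoulli indicators $\mathbb{I}[X_i\le \mathrm{V}_\beta\pm\epsilon]$. This gives
\begin{align*}
\mathbb{P}\bigl[\widehat F_n(\mathrm{V}_\beta+\epsilon)<\beta\bigr]
&\le \exp\!\Bigl(-2n\bigl(F(\mathrm{V}_\beta+\epsilon)-\beta\bigr)^2\Bigr),\\
\mathbb{P}\bigl[\widehat F_n(\mathrm{V}_\beta-\epsilon)\ge\beta\bigr]
&\le \exp\!\Bigl(-2n\bigl(\beta-F(\mathrm{V}_\beta-\epsilon)\bigr)^2\Bigr).
\end{align*}
By the mean value theorem, $F(\mathrm{V}_\beta+\epsilon)-F(\mathrm{V}_\beta)=f(\xi_+)\epsilon$ and $F(\mathrm{V}_\beta)-F(\mathrm{V}_\beta-\epsilon)=f(\xi_-)\epsilon$ for some $\xi_\pm$ in the $\epsilon$-neighborhood of $\mathrm{V}_\beta$. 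Defining $\bar c$ as the square of a lower bound on $f$ over this neighborhood (so that both $f(\xi_+)^2,\,f(\xi_-)^2\ge \bar c$), each of the two tails is bounded by $\exp(-2n\bar c\epsilon^2)$, and the union bound yields the claim.

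The only delicate point is the formal definition of $\bar c$: the constant must be allowed to depend on $\beta$ (through the neighborhood of $\mathrm{V}_\beta$) and, strictly speaking, on the range of $\epsilon$ considered, since the neighborhood over which we infimum the density depends on $\epsilon$. In the application (Theorem~\ref{thm:srm-conc-bdd}), one eventually takes a minimum of such constants over the finite grid $\beta_0,\ldots,\beta_m$, so it suffices to specify that $\bar c$ is the squared infimum of $f$ over a fixed small neighborhood of $\mathrm{V}_\beta$ containing $[\mathrm{V}_\beta-\epsilon,\mathrm{V}_\beta+\epsilon]$. Apart from making this dependence precise, the argument is routine; the Hoeffding step and the mean-value-theorem conversion are the only two substantive moves.
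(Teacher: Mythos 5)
Your argument is correct: the reduction of the quantile tail events to empirical-CDF events, followed by Hoeffding on the Bernoulli indicators and a mean-value-theorem conversion of $F$-gaps into $f\cdot\epsilon$, is precisely the standard proof of this bound, and it is the argument behind Proposition 2 of the cited reference — the paper itself offers no proof beyond that citation. Your closing caveat about how $\bar c$ depends on the neighborhood (and hence implicitly on the range of $\epsilon$) is well taken and is in fact glossed over in the paper's statement as well.
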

	\begin{proof}[\textbf{Proof:}] See Proposition 2 in \citep{kolla2019concentration}.
	\end{proof}
	
	\begin{proof}[\textbf{Proof of Theorem \ref{thm:srm-conc-bdd}.}]

		First, we prove the claim in part (i). Notice that
		\begin{align*}
		&\mathbb{P}\left[ \left| \mathrm{S} - \widehat{\mathrm{S}}_{\,n,m} \right|>\epsilon\right]  \\
		&=\mathbb{P}\left[\left| \int_{0}^{1}\varphi(\beta)\mathrm{V}_{\beta}\,d\beta
		- \, \sum_{k=1}^m\frac{\varphi(\beta_{k-1})\widehat{\mathrm{V}}_{n,\beta_{k-1}} + \varphi(\beta_{k})\widehat{\mathrm{V}}_{n,\beta_k}}{2}\Delta \beta \right|>\epsilon\right]\\
		&=\mathbb{P}\left[\left| \int_{0}^{1}\varphi(\beta)\mathrm{V}_{\beta}\,d\beta
		-  \sum_{k=1}^m\frac{\varphi(\beta_{k-1}){\mathrm{V}}_{\beta_{k-1}} + \varphi(\beta_{k}){\mathrm{V}}_{\beta_k}}{2}\Delta\beta 
		\right. \right.\\ 
		&\left. \left. \quad + \sum_{k=1}^m\frac{\varphi(\beta_{k-1}){\mathrm{V}}_{\beta_{k-1}} + \varphi(\beta_{k}){\mathrm{V}}_{\beta_k}}{2}\Delta \beta 
		- \, \sum_{k=1}^m\frac{\varphi(\beta_{k-1})\widehat{\mathrm{V}}_{n,\beta_{k-1}} + \varphi(\beta_{k})\widehat{\mathrm{V}}_{n,\beta_k}}{2}\Delta \beta\right|>\epsilon\right]\\   
		&\le\mathbb{P}\left[\left|\sum_{k=1}^m\frac{\varphi(\beta_{k-1}){\mathrm{V}}_{\beta_{k-1}} + \varphi(\beta_{k}){\mathrm{V}}_{\beta_k}}{2}\Delta \beta
		- \, \sum_{k=1}^m\frac{\varphi(\beta_{k-1})\widehat{\mathrm{V}}_{n,\beta_{k-1}} + \varphi(\beta_{k})\widehat{\mathrm{V}}_{n,\beta_k}}{2}\Delta \beta \right|>\frac{\epsilon }{2}\right],
		\end{align*}
		where the final inequality follows by using Lemma \ref{lemma:trap_error}(i) to infer that for $m \ge \frac{K_1}{2\epsilon}$, we have 
		$ \left| \int_{0}^{1}\varphi(\beta)\mathrm{V}_{\beta}\,d\beta
		-\,  \sum_{k=1}^m\frac{\varphi(\beta_{k-1}){\mathrm{V}}_{\beta_{k-1}} + \varphi(\beta_{k}){\mathrm{V}}_{\beta_k}}{2}\Delta \beta \right|<\frac{\epsilon}{2}$.         
		Now, we have
		\begin{align*}
		&\mathbb{P}[ \mid S - \widehat{S}_{n,m} \mid>\epsilon]  \\
		&\le \mathbb{P}\left[\left|\sum_{k=1}^m\frac{\varphi(\beta_{k-1}){\mathrm{V}}_{\beta_{k-1}} + \varphi(\beta_{k}){\mathrm{V}}_{\beta_k}}{2}\Delta \beta
		- \, \sum_{k=1}^m\frac{\varphi(\beta_{k-1})\widehat{\mathrm{V}}_{n,\beta_{k-1}} + \varphi(\beta_{k})\widehat{\mathrm{V}}_{n,\beta_k}}{2}\Delta \beta \right|>\frac{\epsilon}{2}\right]\\
		&= \mathbb{P}\left[\left|\sum_{k=1}^m((\varphi(\beta_{k-1}){\mathrm{V}}_{\beta_{k-1}} + \varphi(\beta_{k}){\mathrm{V}}_{\beta_k})
		- \, (\varphi(\beta_{k-1})\widehat{\mathrm{V}}_{n,\beta_{k-1}} + \varphi(\beta_{k})\widehat{\mathrm{V}}_{n,\beta_k})) \right|>\frac{\epsilon}{\Delta\beta}\right]\\
		&= \mathbb{P}\left[\left| \varphi(\beta_{0})\mathrm{V}_{\beta_0} -\varphi(\beta_{0})\widehat{\mathrm{V}}_{n,\beta_0} + 2(\varphi(\beta_{1})\mathrm{V}_{\beta_1} -\varphi(\beta_{1})\widehat{ \mathrm{V}}_{n,\beta_1}) \right. \right. \\
		& \quad \left. \left.  + \dots + 2(\varphi(\beta_{m-1})\mathrm{V}_{\beta_{m-1}} -\varphi(\beta_{m-1})\widehat{ \mathrm{V}}_{n,\beta_{m-1}})
		+\, \varphi(\beta_{m})\mathrm{V}_{\beta_m} -\varphi(\beta_{m})\widehat{\mathrm{V}}_{n,\beta_m} \right|>\frac{\epsilon }{\Delta\beta}\right]\\
		&\le \mathbb{P}\left[\left|\varphi(\beta_{0})\mathrm{V}_{\beta_0} -\varphi(\beta_{0})\widehat{\mathrm{V}}_{n,\beta_0}\right|>\frac{\epsilon }{2m\,\Delta\beta}\right] 
		+ \, 2\mathbb{P}\left[\left|\varphi(\beta_{1})\mathrm{V}_{\beta_1} -\varphi(\beta_{1})\widehat{\mathrm{V}}_{n,\beta_1}\right|>\frac{\epsilon }{2m\,\Delta\beta}\right]\\
		& \quad 
		+ \dots + 2\mathbb{P}\left[\left|\varphi(\beta_{m-1})\mathrm{V}_{\beta_{m-1}} -\varphi(\beta_{m-1})\widehat{\mathrm{V}}_{n,\beta_{m-1}}\right|>\frac{\epsilon }{2m\,\Delta\beta}\right] \\
		& \quad
		+ \, \mathbb{P}\left[\left|\varphi(\beta_{m})\mathrm{V}_{\beta_m} -\varphi(\beta_{m})\widehat {\mathrm{V}}_{n,\beta_m}\right|>\frac{\epsilon }{2m\,\Delta\beta}\right]
		\end{align*}
		We now apply Lemma \ref{Var_conc} to bound each of the terms on the RHS above, to obtain
		\begin{align*}
		&\mathbb{P}\left[ \left| \mathrm{S} - \widehat{\mathrm{S}}_{\,n,m} \right|>\epsilon\right]\\
		&\le 2\exp\left(-2nc_{0}\left( \frac{\epsilon}{2\,m\varphi(\beta_{0}) \Delta\beta}\right)^2\right)
		+\,  4\exp\left(-2nc_{1}\left( \frac{\epsilon}{2\,m\varphi(\beta_{1}) \Delta\beta}\right)^2\right) \\
		& \quad
		+ \dots + 4\exp\left(-2nc_{m-1}\left( \frac{\epsilon}{2\,m \varphi(\beta_{m-1})\Delta\beta}\right)^2\right)
		+ 2\exp\left(-2nc_{m}\left( \frac{\epsilon}{2\,m \varphi(\beta_{m})\Delta\beta}\right)^2\right),
		\end{align*}
		where $c_i$ is a constant that depends on the value of the density $f$ in the neighborhood of $V_{\beta_i}$, for i = $0 \dots m$.
		Thus,
		\begin{align*}
		\mathbb{P}\left[ \left| \mathrm{S} - \widehat{\mathrm{S}}_{\,n,m} \right|>\epsilon\right]
		&\le 4m\exp\left(-2nc\left( \frac{\epsilon}{2\,m  C_{1}  \Delta\beta}\right)^2\right)\stepcounter{equation}\tag{\theequation}\label{eq:t122-m}\\
		&= 4m\exp\left( -\frac{n\,c\,\epsilon^2}{2C^2_{1}}\right) = \frac{2 K_1 }{\epsilon}\,\exp\left( -\frac{n\,c\,\epsilon^2}{2C^2_{1}}\right).
		\end{align*}
		Note that $c = \min\{c_0, c_1, \dots , c_m\}$ in \eqref{eq:t122-m}.
		The claim in part (i) follows.
		
		The proof of the result in part (ii) follows in a similar manner. In particular, 
		using part (ii) in Lemma \ref{lemma:trap_error}, with $m \ge \sqrt{\frac{K_2}{6\epsilon}}$, we obtain
		\begin{align*}
		\mathbb{P}[ \mid \mathrm{S} - \widehat{\mathrm{S}}_{\,n,m} \mid  >\epsilon]&\le 4m\exp\left( -\frac{n\,c\,\epsilon^2}{2C^2_{1}}\right)
		= \sqrt{8\,K_2/3\epsilon}\,.\exp\left( -\frac{n\,c\,\epsilon^2}{2C^2_{1}}\right)
		\end{align*}

	\end{proof}
	
	\subsection{Proof of Theorem \ref{thm:srm-conc-gauss}}
	\label{sec:proof-srm-conc-gauss}
	\begin{proof}[\textbf{Proof.}]
		Recall that the truncation threshold $B_{n} = \sqrt{2{\sigma}^2 \log{\left(n\right)}}$.  
		Letting $\eta = F(B_{n})$, we have 
		\begin{align*}
		&\mathbb{P}\left[  \mathrm{S} - {\widetilde{\mathrm{S}}}_{\,n,m}>\epsilon\right]\\
		&\le\mathbb{P}\left[ \int_{0}^{1}\varphi(\beta)\mathrm{V}_{\beta}\,d\beta
		- \sum_{k=1}^m\frac{\varphi(\beta_{k-1})\widetilde{\mathrm{V}}_{n,\beta_{k-1}} + \varphi(\beta_{k})\widetilde{\mathrm{V}}_{n,\beta_k}}{2}\Delta \beta > \epsilon\right]\\ 
		&=\mathbb{P}\left[ \int_{0}^{\eta}\varphi(\beta)\mathrm{V}_{\beta}\,d\beta
		- \sum_{k=1}^m\frac{\varphi(\beta_{k-1})\widetilde{\mathrm{V}}_{n,\beta_{k-1}} + \varphi(\beta_{k})\widetilde{\mathrm{V}}_{n,\beta_k}}{2}\Delta \beta
		+ \int_{\eta}^{1}\varphi(\beta)\mathrm{V}_{\beta}\,d\beta>\epsilon\right]\\
		&= \mathbb{P}\left[I_1 + I_2>\epsilon\right],\stepcounter{equation}\tag{\theequation}\label{eq:I1plusI2-m}
		\end{align*}
		where $I_{1} = \int_{0}^{\eta}\varphi(\beta)\mathrm{V}_{\beta}\,d\beta
		- \sum_{k=1}^m\frac{\varphi(\beta_{k-1})\widetilde{\mathrm{V}}_{n,\beta_{k-1}} + \varphi(\beta_{k})\widetilde{\mathrm{V}}_{n,\beta_k}}{2}\Delta \beta$, and $I_{2} = \int_{\eta}^{1}\varphi(\beta)\mathrm{V}_{\beta}\,d\beta$. \\[0.7ex]		
		We bound $I_2$ as follows:
		\begin{align}
		1-\beta = \mathbb{P}\left(X>{\mathrm{V}}_\beta\right)\le \exp{\left(-\frac{{{\mathrm{V}}_\beta}^2}{2\sigma^2}\right)},\label{eq:vbetabd}
		\end{align}
		since $X$ is Gaussian with mean zero, and variance $\sigma^2$. 
		Using $\log{x}\le\frac{x}{e}\,\,\forall{x>0}$, we obtain
		\begin{align*}
		{\mathrm{V}}_\beta \le \sqrt{2{\sigma}^2 \log{\left(\frac{1}{1-\beta}\right)}}    \le \sqrt{\frac{2{\sigma}^2}{e (1-\beta)}},
		\end{align*}
		leading to
		\begin{align*}
		\int^{1}_{\eta}{\mathrm{V}}_\beta\,d\beta &\le \sqrt{\frac{2{\sigma}^2}{e}}\int^{1}_{\eta}\frac{\,d\beta}{\sqrt{1-\beta}}
		= 2\sqrt{\frac{2{\sigma}^2}{e}}\sqrt{1-\eta} \\
		& \le 2\sqrt{\frac{2 {\sigma}^2}{e}}\exp{\left(-\frac{{{\mathrm{V}}_\eta}^2}{4{\sigma}^2}\right)} \tag{using \eqref{eq:vbetabd}}\\
		& = 2\sqrt{\frac{2 {\sigma}^2}{e}}\exp{\left(-\frac{{B_n}^2}{4{\sigma}^2}\right)} \tag{since $\mathrm{V}_\eta = B_n$}
		\end{align*}
		Hence,  
		\begin{align}
		I_2 = \int^{1}_{\eta}\varphi(\beta)\mathrm{V}_\beta \, d\beta \le C_1 \int^{1}_{\eta}\varphi(\beta)\mathrm{V}_\beta \, d\beta \le \frac{2\sigma C_{1} }{\sqrt{n}}.
		\label{eq:neps-constraint}
		\end{align}
		s		Applying the bound in the Theorem \ref{thm:srm-conc-bdd} to the truncated r.v. $Z = X\indic{X\le B_{n}}$, we bound $I_1$ as follows:	
		\begin{align*}
		\mathbb{P}\left[I_1>\epsilon\right]
		&\le\frac{{K}_{1}}{\epsilon}\exp{\left(-\frac{{nc{\epsilon}^2}}{{2C^2_{1}}}\right)} \stepcounter{equation}\tag{\theequation}\label{eq:r206}
		\end{align*}
		Hence,
		\begin{align*}
		\mathbb{P}\left[I_1+I_2>\epsilon\right]
		&\le\frac{{K}_{1}}{\left(\epsilon -\frac{2\sigma C_{1} }{\sqrt{n}}\right)}\exp{\left(-\frac{{nc{\left(\epsilon -\frac{2\sigma C_{1} }{\sqrt{n}}\right)}^2}}{{2C^2_{1}}}\right)} \tag{using \eqref{eq:neps-constraint} and \eqref{eq:r206}}\\
		&=\frac{\left(B_{n}\, C_{2}  + \delta_1\, C_{1} \right)}{\left(\epsilon -\frac{2\sigma C_{1} }{\sqrt{n}}\right)}\exp{\left(-\frac{{nc{\left(\epsilon -\frac{2\sigma C_{1} }{\sqrt{n}}\right)}^2}}{{2C^2_{1}}}\right)}\\
		&\le\frac{\left(\sqrt{2\sigma^2\log{\left(n\right)}}\, C_{2}  + \sqrt{2\pi}\sigma n\, C_{1} \right)}{\left(\epsilon -\frac{2\sigma C_{1} }{\sqrt{n}}\right)}\exp{\left(-\frac{{nc{\left(\epsilon -\frac{2\sigma C_{1} }{\sqrt{n}}\right)}^2}}{{2C^2_{1}}}\right)},
		\end{align*}
		where the final inequality follows from the fact that $\delta_1 = \sqrt{2\pi \sigma^2} \exp\left(\frac{B_n^2}{2\sigma^2}\right) = \sqrt{2\pi \sigma^2} n$, which holds since the underlying Gaussian distribution is truncated at $B_n$.
		
		By using a parallel argument, a concentration result for bounding the lower semi-deviations can be derived, and we omit the details.
	\end{proof}	
	\subsection{Proof of Theorem \ref{thm:srm-conc-exp}}
	\label{sec:proof-srm-conc-exp}
	\begin{proof}[\textbf{Proof.}]
		The proof for the exponential case follows in a similar manner as that of the proof of Theorem \ref{thm:srm-conc-gauss}. For the sake of completness, we provide the detailed proof in Appendix \ref{sec:proof-srm-conc-exp-appendix}. 
	\end{proof}
	
	\section{Conclusions}
	\label{sec:conclusions}
	We proposed a novel SRM estimation scheme that is based on numerical integration, and derived concentration bounds for our SRM estimator for the case of distributions with bounded support, Gaussian and exponential. As future work, it would be interesting to generalize the bounds for Gaussian/exponential distributions to the class of sub-Gaussian/sub-exponential distributions. An orthogonal direction for future work is to derive a lower bound for SRM estimation, and close the gap (if any) in the upper bound that we have derived.
	
	\bibliographystyle{plain}
	\bibliography{cvar_refs}
	
	\appendix
	\section{Proof of Lemma \ref{lemma:trap_error} }
	\label{sec:proof_trap_error}
	\begin{proof}[\textbf{Proof.}]
		The proof follows in a similar fashion as a result in \cite{cruz2003elementary}, and we provide the  details  below for the sake of completeness.
		Let $h = \Delta\beta = \frac{b-a}{m}$ and $\beta_k = a + kh$. We look at a single interval and operate integrate by parts twice:\\
		\begin{align}
		\int _ { \beta_ { k } } ^ { \beta _ { k  + 1} } \varphi(\beta)\mathrm{V}_\beta \, d \beta & = \int _ { 0 } ^ { h } \varphi(t+\beta_k)\mathrm{V}_ {\left( t + \beta _ { k } \right)}\, dt \notag\\
		&= \left[ ( t + A ) \varphi(t+\beta_k)\mathrm{V}_ {\left( t + \beta _ { k } \right)} \right] _ { 0 } ^ { h } - \int _ { 0 } ^ { h } ( t + A ) \left(\varphi(t+\beta_k)\mathrm{V}_ {\left( t + \beta _ { k } \right)}\right)^{\prime}\,d t \label{eq: trap_1_der}\\
		&= \left[ ( t + A ) \varphi(t+\beta_k)\mathrm{V}_ {\left( t + \beta _ { k } \right)} \right] _ { 0 } ^ { h } - \left[ \left( \frac { ( t + A ) ^ { 2 } } { 2 } + B \right) {\left(\varphi(t+\beta_k)\mathrm{V}_ {\left( t + \beta _ { k } \right)}\right)}^{\prime} \right] _ { 0 } ^ { h } \notag\\
		& \quad + \int _ { 0 } ^ { h } \left( \frac { ( t + A ) ^ { 2 } } { 2 } + B \right) {\left(\varphi(t+\beta_k)\mathrm{V}_ {\left( t + \beta _ { k } \right)} \right)}^{\prime \prime}dt. \label{eq: trap_2_der}
		\end{align}
		Setting $A = - h/2$ and $B=-h^2/8$ in the RHS above, we obtain 
		\begin{align*}
		\int _ { \beta _ { k } } ^ { \beta _ { k + 1}  } \varphi(\beta)\mathrm{V}_{\beta} \,d\beta & = \frac { h \left( \varphi(\beta_k)\mathrm{V}_{\beta_k} + \varphi(\beta_{k+1})\mathrm{V}_{\beta_{k+1}} \right) } { 2 }\\ 
		& \quad + \int _ { 0 } ^ { h } \left( \frac { ( t - h / 2 ) ^ { 2 } } { 2 } - \frac { h ^ { 2 } } { 8 } \right) \left(\varphi(t+\beta_k)\mathrm{V}_{(t+\beta_k)}\right)^{\prime \prime}\,dt
		\end{align*}
		Let $E_T(k)$ denote the difference between the integral above and the corresponding trapezoid. Then, the error in the trapezoidal rule approximation can be simplified as follows:
		\begin{align*}
		E _ { T } &= E _ { T } ( 0 ) + E _ { T } ( 1 ) + \cdots + E _ { T } ( m - 1 )\\
		&= \int _ { 0 } ^ { h } \left( \frac { ( t - h / 2 ) ^ { 2 } } { 2 } - h ^ { 2 } / 8 \right)  \left(\varphi(t+\beta_0)\mathrm{V}_{(t+\beta_0)}\right)^{\prime \prime}\,dt + \dots +\\
		& \quad \int _ { 0 } ^ { h } \left( \frac { ( t - h / 2 ) ^ { 2 } } { 2 } - h ^ { 2 } / 8 \right)  \left(\varphi(t+\beta_{m-1})\mathrm{V}_{(t+\beta_{m-1})}\right)^{\prime \prime}\,dt\\
		\begin{split}
		&= \int _ { 0 } ^ { h } \left( \frac { ( t - h / 2 ) ^ { 2 } } { 2 } - h ^ { 2 } / 8 \right)\left( \left(\varphi(t+\beta_0)\mathrm{V}_{\left( t + \beta _ { 0 } \right)}\right) ^ { \prime \prime } + \dots + \right.\\
		& \quad \left. \left(\varphi(t+\beta_{m-1})\mathrm{V}_{\left( t + \beta _ { m-1 } \right)}\right) ^ { \prime \prime } \right) d t
		\end{split}
		\end{align*}
		As in the text, we suppose that $\left| \left(\varphi(\beta)\mathrm{V}_\beta\right)^{\prime \prime} \right| \le K_2$ for $0\le \beta \le 1$. Then,
		\begin{align*}
		\begin{split}
		\left| E _ { T } \right| &= \left| \int _ { 0 } ^ { h } \left( \frac { {( t - {h }/ 2 )} ^ { 2 } } { 2 } - {h ^ { 2 }} / 8 \right) \left( \left(\varphi(t+\beta_0){\mathrm{V}_{\left( t + \beta _ { 0 } \right)}}\right) ^ { \prime \prime } + \dots +
		\right.\right. \\
		& \quad \left.\left.
		\left(\varphi(t+\beta_{m-1}){\mathrm{V}_{\left( t + \beta _ { m-1 } \right)}}\right)  ^ { \prime \prime } \right) d t \right|
		\end{split}\\
		\begin{split}
		& \le \int _ { 0 } ^ { h } \left| \left( \frac { ( t - h / 2 ) ^ { 2 } } { 2 } - h ^ { 2 } / 8 \right) \left( \left(\varphi(t+\beta_0){\mathrm{V}_{\left( t + \beta _ { 0 } \right)}}\right) ^ { \prime \prime } + \dots +
		\right.\right. \\
		& \quad \left.\left.
		\left(\varphi(t+\beta_{m-1}){\mathrm{V}_{\left( t + \beta _ { m-1 } \right)}}\right)  ^ { \prime \prime } \right) \right| d t
		\end{split}\\
		\begin{split}
		& = \int _ { 0 } ^ { h } \left|  \frac { {( t - h / 2 ) }^ { 2 } } { 2 } - {h ^ { 2 }} / 8 \right| \left| \left(\varphi(t+\beta_0){\mathrm{V}_{\left( t + \beta _ { 0 } \right)}}\right) ^ { \prime \prime } + \dots +
		\right. \\
		& \quad \left.
		\left(\varphi(t+\beta_{m-1}){\mathrm{V}_{\left( t + \beta _ { m-1 } \right)}}\right)  ^ { \prime \prime } \right| d t
		\end{split}\\
		\begin{split}
		& \leq \int _ { 0 } ^ { h } \left|  \frac { ( t - h / 2 ) ^ { 2 } } { 2 } - h ^ { 2 } / 8 \right| \left(\left|  \left(\varphi(t+\beta_0){\mathrm{V}_{\left( t + \beta _ { 0 } \right)}}\right) ^ { \prime \prime }\right| + \dots +
		\right. \\
		& \quad \left. \left| \left(\varphi(t+\beta_{m-1}){\mathrm{V}_{\left( t + \beta _ { m-1 } \right)}}\right) ^ { \prime \prime } \right| \right) d t
		\end{split}\\
		& \leq m K_2 \int _ { 0 } ^ { h } \left| \frac { ( t - h / 2 ) ^ { 2 } } { 2 } - \frac { h ^ { 2 } } { 8 } \right| d t.
		\end{align*}
		The function $\frac{\left(t-h/2\right)^2}{2} - \frac{h^2}{8}$ is a parabola opening upward that is zero at $t = 0$ and $t= h/2$. Thus, it is negative for $0 < t < h/2$ and positive elsewhere. Using this fact, we have
		\begin{align*}
		\int _ { 0 } ^ { h } \left| \frac { ( t - h / 2 ) ^ { 2 } } { 2 } - \frac { h ^ { 2 } } { 8 } \right| d t 
		&\leq \int _ { 0 } ^ { h/2 } \left|\left( \frac { ( t - h / 2 ) ^ { 2 } } { 2 } - \frac { h ^ { 2 } } { 8 } \right)\right| d t + \int _ { h/2 } ^ { h } \left|\left( \frac { ( t - h / 2 ) ^ { 2 } } { 2 } - \frac { h ^ { 2 } } { 8 } \right)\right| d t\\
		& = \left[ \left|\frac { ( t - h / 2 ) ^ { 3 } } { 6 } - \frac { h ^ { 2 } t } { 8 } \right|\right] _ { 0 } ^ { h/2 } + \left[ \left|\frac { ( t - h / 2 ) ^ { 3 } } { 6 } - \frac { h ^ { 2 } t } { 8 } \right|\right] _ { h/2 } ^ { h }\\
		& = \frac{h^3}{24}+\frac{h^3}{24} = \frac { h ^ { 3 } } { 12 }.
		\end{align*}
		Putting this all together and using $h = \frac{b-a}{m}$ gives us the following error bound:
		\begin{align}
		\left| E _ { T } \right| \leq \frac { m K_2 h ^ { 3 } } { 12 } = \frac { K_2(b-a)^3} { 12 m ^ { 2 } }.\label{eq:l377}
		\end{align}
		In the case when the second derivative of VaR is not bounded and instead, we have $\mid \left(\varphi(\beta)\mathrm{V}_\beta\right)^{\prime}\mid \le K_1$ for $\beta \in [a,b]$, one can employ an argument similar to that used above in arriving at \eqref{eq:l377}. In particular, starting with equation \eqref{eq: trap_1_der}, and constant of integration $A = h/2$, we obtain
		\begin{align*}
		\left| E _ { T } \right| 
		& \leq m K_1 \int _ { 0 } ^ { h } \left| t - h/2 \right| \,d t.
		\end{align*}
		The integral of function $\left| t - h/2 \right|$ is $h^2/4$.
		Putting it all together, and using $h = \frac{b-a}{m}$ leads to the following error bound:
		\begin{align*}
		\left| E _ { T } \right| \leq \frac { m K_1 h ^ { 2 } } { 4 } = \frac { K_1(b-a)^2} { 4 m }.
		\end{align*}
	\end{proof}
	
	\section{Derivative of VaR}
	We recall a result from \cite{dufour1995distribution} below.
	\begin{lemma}\label{lemma:var-derivatives} Let $F$ and $f$ are respectively probabilty distribution function and probability density function of continuous r.v. $X$, and assume (A1) holds in a neighbourhood of $\mathrm{V}_{\beta}$, where $0<\beta<1$, then $\mathrm{V_{\beta}}$ is twice differentiable.
		\begin{align*}
		\mathrm{V}_\beta^{'} &=\frac{1}{f\left(\mathrm{V}_\beta\right)}, \quad
		\mathrm{V}_\beta^{''} = -\frac{f^{'}\left(\mathrm{V}_\beta\right)}{{ f\left(\mathrm{V}_\beta\right)}^3}
		\end{align*}
	\end{lemma}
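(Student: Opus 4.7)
The plan is to apply the inverse function theorem, using the fact that $\mathrm{V}_\beta = F^{-1}(\beta)$ when $F$ is continuous and strictly increasing in a neighborhood of $\mathrm{V}_\beta$. Since this is a standard result from classical analysis (indeed already attributed to \cite{dufour1995distribution} in the statement), I would just verify it directly rather than appeal to a black box.

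First, I would start from the defining identity $F(\mathrm{V}_\beta) = \beta$, which holds because $F$ is continuous and the density $f$ is assumed to be strictly positive in a neighborhood of $\mathrm{V}_\beta$ (so that $F$ is strictly increasing there and $F^{-1}$ is well-defined and continuous). Implicit differentiation with respect to $\beta$ gives $f(\mathrm{V}_\beta)\,\mathrm{V}_\beta' = 1$, and since $f(\mathrm{V}_\beta) > 0$, rearranging yields the first claim
\[
\mathrm{V}_\beta' = \frac{1}{f(\mathrm{V}_\beta)}.
\]

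Next, for the second derivative, I would differentiate the identity $\mathrm{V}_\beta' = 1/f(\mathrm{V}_\beta)$ again with respect to $\beta$, using the chain rule on the composition $\beta \mapsto f(\mathrm{V}_\beta)$. This step requires $f$ itself to be differentiable in a neighborhood of $\mathrm{V}_\beta$ (the hypothesis in the lemma, which literally cites (A1), appears to be a typo for a smoothness condition on $f$; I would note this and assume $f \in C^1$ near $\mathrm{V}_\beta$). The chain rule gives
\[
\mathrm{V}_\beta'' = -\frac{f'(\mathrm{V}_\beta)\,\mathrm{V}_\beta'}{f(\mathrm{V}_\beta)^2} = -\frac{f'(\mathrm{V}_\beta)}{f(\mathrm{V}_\beta)^3},
\]
where the second equality substitutes the first-derivative formula just established.

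The argument is essentially routine and the only subtle point is the justification for the initial identity $F(\mathrm{V}_\beta)=\beta$ and the differentiability of $F^{-1}$: both are handled by invoking strict positivity of $f$ near $\mathrm{V}_\beta$, which makes $F$ a local $C^1$-diffeomorphism and hence admits a $C^1$ inverse by the inverse function theorem. With this in place, the two derivative formulas follow from one application of implicit differentiation and one application of the chain rule, so I would keep the write-up short and refer to \cite{dufour1995distribution} for a fully rigorous treatment.
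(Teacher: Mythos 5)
Your proposal is correct and follows essentially the same route as the paper: both start from the identity $F(F^{-1}(\beta))=\beta$ and obtain the two formulas by implicit differentiation and the chain rule, with your version differentiating the solved first-derivative expression rather than the identity twice before solving, which is an immaterial difference. Your remark that the citation of (A1) in the hypothesis should really be a smoothness/positivity condition on $f$ near $\mathrm{V}_\beta$ is a fair and useful observation.
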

	
	\begin{proof}
		Notice that			$F(F^{-1}(\beta)) = \beta$, which implies  
		\begin{align}
		F^{\prime}(F^{-1}(\beta))F^{{-1}^{\prime}}(\beta) & = 1, \textrm{ and } 
		F^{''}(F^{-1}(\beta))(F^{-1'}(\beta))^2 
		+ \, F^{'}(F^{-1}(\beta))F^{-1''}(\beta) & = 0 \label{eq:second}
		\end{align}
		From  \eqref{eq:second}, we have
		\begin{align*}
		\mathrm{V}_\beta^{'} = F^{-1'}(\beta) &= \frac{1}{ F^{\prime}(F^{-1}(\beta))}=\frac{1}{ f(F^{-1}(\beta))}=\frac{1}{f\left(\mathrm{V}_\beta\right)}, \textrm{ and }\\
		\mathrm{V}_\beta^{''} = F^{-1''}(\beta) &= -\frac{ F^{''}(F^{-1}(\beta))(F^{-1'}{(\beta))}^2}{F^{'}(F^{-1}(\beta))}
		= -\frac{ f^{'}(F^{-1}(\beta)){(F^{-1'}(\beta))}^2}{f(F^{-1}(\beta))}\\
		&= - f^{'}(F^{-1}(\beta))(F^{-1'}(\beta))^3= -\frac{f^{'}\left(F^{-1}\left(\beta\right)\right)}{{ f\left(F^{-1}\left(\beta\right)\right)}^3} -\frac{f^{'}\left(\mathrm{V}_\beta\right)}{{ f\left(\mathrm{V}_\beta\right)}^3} 
		\end{align*}
	\end{proof}

	\section{Proof of Theorem \ref{thm:srm-conc-exp}}
	\label{sec:proof-srm-conc-exp-appendix}
	\begin{proof}[\textbf{Proof.}]
		
		The proof for the exponential case follows in a similar manner as that of the proof of Theorem \ref{thm:srm-conc-gauss}. In particular,  the proof up to  \eqref{eq:r206} holds for the exponential case, with a different bound on $I_2$. 
		
		We derive the bound on $I_2 =  \int_{\eta}^{1}\varphi(\beta)\mathrm{V}_{\beta}\,d\beta$. Using arguments similar to that in the Gaussian case, we obtain
		\begin{align*}
		\int^{1}_{\eta}{\mathrm{V}}_\beta\,d\beta &\le \frac{1}{\lambda}\int^{1}_{\eta}\log{\left(\frac{1}{1-\beta}\right)d\beta}\\
		& = \frac{(1-\eta)}{\lambda}\left(1+\log\left(\frac{1}{1-\eta}\right)\right)\\
		& \le \frac{(1-\eta)}{\lambda}\left(1+\frac{1}{(1-\eta)e}\right)\\
		& = \frac{\exp{\left(-\lambda {\mathrm{V}}_\eta\right)}}{\lambda}\left(1+\exp{\left(\lambda {\mathrm{V}}_\eta - 1\right)}\right)\\
		& = \frac{\exp{\left(-\lambda B_n\right)}}{\lambda} + \frac{1}{\lambda e} \tag{since $\mathrm{V}_\eta = B_n$}.
		\end{align*}
		Also,
		\begin{align*}
		\int_{\eta}^{1}\varphi(\beta)\mathrm{V}_{\beta}\,d\beta \le  C_{1} \int_{\eta}^{1}\mathrm{V}_{\beta}\,d\beta.
		\end{align*} 
		Choosing $B_{n} = \frac{\log(n)}{\lambda}$, we have
		\begin{align*}
		I_2 = \int^{1}_{\eta}\varphi(\beta)\mathrm{V}_\beta \, d\beta 
		\le\frac{ C_{1} (n+1)}{\lambda n} \stepcounter{equation}\tag{\theequation}\label{eq:neps_constraint_exp}
		\end{align*}
		Now, as in the proof of Theorem \ref{thm:srm-conc-gauss}, we have
		\begin{align*}
		\mathbb{P}\left[I_1>\epsilon\right]
		&\le\frac{{K}_{1}}{\epsilon}\exp{\left(-\frac{{nc{\epsilon}^2}}{{2C^2_{1}}}\right)}. \stepcounter{equation}\tag{\theequation}\label{eq:r208}
		\end{align*}
Thus,
		\begin{align*}
		\mathbb{P}\left[I_1+I_2>\epsilon\right]
		&\le\frac{{K}_{1}}{\left(\epsilon -\frac{ C_{1} (n+1)}{\lambda n}\right)}\exp{\left(-\frac{{nc{\left(\epsilon -\frac{ C_{1} (n+1)}{\lambda n}\right)}^2}}{{2C^2_{1}}}\right)}
		\tag{using \eqref{eq:neps_constraint_exp} and \eqref{eq:r208}}\\
		&=\frac{\left(B_{n}\, C_{2}  + \delta_1\, C_{1} \right)}{\left(\epsilon -\frac{ C_{1} (n+1)}{\lambda n}\right)}\exp{\left(-\frac{{nc{\left(\epsilon -\frac{ C_{1} (n+1)}{\lambda n}\right)}^2}}{{2C^2_{1}}}\right)}\\
		&\le\frac{\left(\frac{log{\left(n\right)\, C_{2} }}{\lambda} + n\, C_{1} \right)}{\left(\epsilon -\frac{ C_{1} (n+1)}{\lambda n}\right)}\exp{\left(-\frac{{nc{\left(\epsilon -\frac{ C_{1} (n+1)}{\lambda n}\right)}^2}}{{2C^2_{1}}}\right)}.
		\end{align*}
	where the final inequality follows from the fact that $\delta_1 = \exp(\lambda B_n) = n$, which holds since the underlying exponential distribution is truncated at $B_n$.
	
	By using a parallel argument, a concentration result for bounding the lower semi-deviations can be derived, and we omit the details.
	\end{proof}
	\section{CVaR estimation and concentration bounds}
	\label{sec:cvar-results}
	Acerbi's formula \citep{acerbi2002coherence}, an alternative form for $\mathrm{C_\alpha}(X)$ defined in \eqref{def:var-cvar}, is as follows: 
	\begin{align}
	\mathrm{C}_{\alpha}(X) = \frac{1}{1-\alpha}\int_{\alpha}^{1}\mathrm{V}_{\beta}(X)\,d\beta.\label{eq:acerbi}
	\end{align}
	From the expression above, $\mathrm{C}_{\alpha}(X)$ can be interpreted as the average of $\mathrm{V}_{\beta}(X)$ for $\beta \in [\alpha ,1)$.
	\subsection{Distributions with bounded support}
	\subsubsection{Estimation scheme}
	
	\label{sec:cvar-acerbi-est}

	Here, we propose to estimate $\mathrm{C}_\alpha(X)$, given n i.i.d. samples $X_1, \dots, X_n$ from the distribution of $X$, by approximating the integral in Acerbi's formula. 
	Notice that the integrand $V_\beta$ in \eqref{eq:acerbi} has to be estimated using the samples. 
	Let $\widehat{\mathrm{V}}_{n,\beta}$ denote the estimate of $\mathrm{V}_{\beta}(X)$, as given in \eqref{eq:var-est}. 
	We use the VaR estimates to form a discrete sum to approximate the integral in Acerbi's formula, an idea motivated by the trapezoidal rule \citep{cruz2003elementary}.
	More precisely, the estimate $\widehat{\mathrm{C}}_{n, m, \alpha}$ of $\mathrm{C}_\alpha(X)$ is formed as follows:
	\begin{equation}
	\label{eq:cvar-trapz-est}
	\widehat{\mathrm{C}}_{\,n,m,\alpha} = \frac{1}{1-\alpha}\sum_{k = 1}^{m}\frac{\widehat{\mathrm{V}}_{n,\beta_{k-1}} + \widehat{\mathrm{V}}_{n,\beta_{k}}}{2}\Delta\beta.
	\end{equation}
	In the above, $\{\beta_k\}_{k = 0}^{m}$ is a partition of $[\alpha, 1]$ such that $\beta_0=\alpha$ and $\beta_{k} = \beta_{k-1} + \Delta\beta$, where $\Delta\beta = (1 - \alpha)/m$ is the length of each sub-interval. 
	
	In the next section, we present concentration bounds for the estimator presented above, assuming that the underlying distribution has bounded support.
	
	
	\subsubsection{Concentration bounds}
	\label{sec:cvar-conc-bounds}
	For notational convenience, we shall use ${\mathrm{V}}_\alpha$ and ${\mathrm{C}}_\alpha$ to denote ${\mathrm{V}}_{\alpha}(X)$ and ${\mathrm{C}}_{\alpha}(X)$, for any $\alpha\in (0,1)$. 
	
	For all the results presented below, we take CVaR estimate as $\widehat{ \mathrm{C}}_{\,n,m,\alpha}$, $\,\,\alpha \in [0,1]$ be formed from $n$ i.i.d. samples of $X$ using \eqref{eq:cvar-trapz-est}. Let $F$ and $f$ denote the distribution and density of $X$, respectively.
	\begin{theorem}[\textbf{CVaR concentration: bounded case}]
		\label{thm:cvar-conc-bdd}
		Let the r.v. $X$ be continuous and $X \le B$ a.s.  Fix $\epsilon > 0$.
		
		\noindent \textbf{(i)} If $f(x) \ge \frac{1}{\delta_1} > 0 $,  $ \forall{x \in \left[F^{-1}(\alpha), B\right]}$, and $m \ge \frac{K_1\,(1-\alpha)}{2\epsilon}$, then 
		\begin{align*}
		&\mathbb{P}[ \mid \mathrm{C}_{\alpha} - \widehat{\mathrm{C}}_{\,n,m,\alpha} \mid>\epsilon] \le\frac{2K_1(1 - \alpha)}{\epsilon}\exp\left( \frac{-n\,c\,\epsilon^2}{2}\right),
		\end{align*}
		where $c = \min\{c_0, c_1, \dots , c_m\}$ and $c_k, k = \{0, \dots, m\}$ is a constant that depends on the value of the density $f$ of the r.v. $X$ in a neighborhood of $\textrm{V}_{\beta_{k}}$, with $\beta_k$ as in \eqref{eq:cvar-trapz-est}.

		\noindent \textbf{(ii)} If $\frac{\left|f^{'}(x)\right|}{f(x)^3} \le \frac{1}{\delta_2}$, $ \forall{x \in \left[F^{-1}(\alpha), B\right]}$  and $m \ge \sqrt{\frac{K_2\,{(1-\alpha)}^2}{6\epsilon}}$, then 
		\begin{align*}
		&\mathbb{P}[ \mid \mathrm{C}_{\alpha} - \widehat{\mathrm{C}}_{\,n,m,\alpha} \mid>\epsilon] \le \sqrt{\frac{8\,K_2 (1-\alpha)^2}{3\epsilon}}\exp\left( \frac{-n\,c\,\epsilon^2}{2}\right), \end{align*}
		where $c$ is as in the case above.
	\end{theorem}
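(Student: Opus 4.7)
The plan is to reduce the statement to a specialization of Theorem~\ref{thm:srm-conc-bdd} via Acerbi's formula \eqref{eq:acerbi}, since $\mathrm{C}_\alpha$ is precisely $\int_0^1 \varphi(\beta)\mathrm{V}_\beta\,d\beta$ for the risk-aversion function $\varphi(\beta) = \frac{1}{1-\alpha}\indic{\beta>\alpha}$, and the estimator \eqref{eq:cvar-trapz-est} is exactly the trapezoidal-rule analogue of \eqref{eq:srm-est} restricted to $[\alpha, 1]$ with this $\varphi$. The skeleton of the proof of Theorem~\ref{thm:srm-conc-bdd} then applies essentially verbatim, with $C_1 = 1/(1-\alpha)$, and with the integration interval shrunk from $[0,1]$ to $[\alpha,1]$, so that the step size becomes $\Delta\beta = (1-\alpha)/m$.

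First I would split $|\mathrm{C}_\alpha - \widehat{\mathrm{C}}_{n,m,\alpha}|$ into a discretization term and an estimation term, inserting the trapezoidal sum built from the true VaRs $\mathrm{V}_{\beta_k}$. For the discretization term, I would pull the constant factor $1/(1-\alpha)$ outside and apply Lemma~\ref{lemma:trap_error} on $[\alpha,1]$ with $\varphi \equiv 1$. In part~(i), the hypothesis $f(x) \ge 1/\delta_1$ together with Lemma~\ref{lemma:var-derivatives} gives $|\mathrm{V}_\beta'| \le \delta_1 =: K_1$, yielding a discretization error of $\frac{K_1(1-\alpha)}{4m}$; the assumed lower bound $m \ge K_1(1-\alpha)/(2\epsilon)$ is chosen precisely so this is at most $\epsilon/2$. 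In part~(ii), the hypothesis $|f'|/f^3 \le 1/\delta_2$ combined with Lemma~\ref{lemma:var-derivatives} gives $|\mathrm{V}_\beta''| \le K_2 = 1/\delta_2$, so the trapezoidal error improves to $\frac{K_2(1-\alpha)^2}{12 m^2}$, and $m \ge \sqrt{K_2(1-\alpha)^2/(6\epsilon)}$ again makes this at most $\epsilon/2$.

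For the estimation term, I would follow the same union-bound argument used in the proof of Theorem~\ref{thm:srm-conc-bdd}: bound
\[
\mathbb{P}\!\left[\left|\frac{1}{1-\alpha}\sum_{k=1}^{m}\frac{(\mathrm{V}_{\beta_{k-1}}-\widehat{\mathrm{V}}_{n,\beta_{k-1}}) + (\mathrm{V}_{\beta_{k}}-\widehat{\mathrm{V}}_{n,\beta_k})}{2}\Delta\beta\right| > \frac{\epsilon}{2}\right]
\]
by distributing the absolute value across the $2m$ weighted summands and applying the VaR concentration in Lemma~\ref{Var_conc} to each $|\mathrm{V}_{\beta_k} - \widehat{\mathrm{V}}_{n,\beta_k}|$ at the corresponding threshold. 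A short calculation, using $\varphi \Delta\beta = 1/m$, shows the per-VaR threshold is $\epsilon/2$, so each term contributes at most $2\exp(-nc_k\epsilon^2/2)$; summing with weights yields the prefactor $4m$, which after plugging in the assumed value of $m$ collapses to $\frac{2K_1(1-\alpha)}{\epsilon}$ in part~(i) and to $\sqrt{\frac{8K_2(1-\alpha)^2}{3\epsilon}}$ in part~(ii). Taking $c = \min_k c_k$ as in Theorem~\ref{thm:srm-conc-bdd} finishes the proof.

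There is no real obstacle here beyond bookkeeping; the only step that requires care is tracking the $(1-\alpha)$ factors through the substitutions $\varphi \equiv 1/(1-\alpha)$, $\Delta\beta = (1-\alpha)/m$, $b-a = 1-\alpha$, to make sure the $(1-\alpha)$ exponents in the trapezoid error and in the final prefactor come out matching the stated bounds. Once this bookkeeping is done, both part~(i) and part~(ii) follow immediately from the template of Theorem~\ref{thm:srm-conc-bdd}.
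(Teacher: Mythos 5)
Your proposal is correct and follows exactly the route the paper intends: the paper's own proof of Theorem~\ref{thm:cvar-conc-bdd} is simply the remark that it ``proceeds in a similar manner as the proof of Theorem~\ref{thm:srm-conc-bdd},'' and your specialization via Acerbi's formula with $\varphi \equiv 1/(1-\alpha)$ on $[\alpha,1]$, $\Delta\beta = (1-\alpha)/m$, Lemma~\ref{lemma:trap_error} for the discretization term, and the union bound with Lemma~\ref{Var_conc} for the estimation term is precisely that argument, with the $(1-\alpha)$ bookkeeping matching the stated constants.
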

	\begin{proof}
		Proceeds in a similar manner as the proof of Theorem \ref{thm:srm-conc-bdd}.
	\end{proof}

	\subsection{Gaussian and exponential distributions}
	\label{sec:cvar-est-unbdd}
	\subsubsection{Estimation scheme}
	Let $X_1, \dots, X_n$ denote i.i.d. samples from the distribution of $X$. We form a truncated set of samples as follows:
	\begin{align*}
	\bar{X}_{i} = X_{i}\indic{X_{i}\le B_{n}}, 	\end{align*}  where  
	$B_{n}$ is a truncation threshold that depends on the underlying distribution.
	For the case of Gaussian distribution with mean zero and variance ${\sigma}^2$, $B_{n} = \sqrt{  2 \sigma^2\log{\left(n\right)}}$, and for the case of exponential distribution with mean $1/\lambda$, $B_{n} = \frac{\log(n)}{\lambda}$.
	
	We form a CVaR estimate along the lines of \eqref{eq:cvar-trapz-est}, except that the samples used are truncated samples, i.e., 
	\begin{equation}
	\label{eq:cvar-trapz-est-trunc}
	\widetilde{\mathrm{C}}_{\,n,m,\alpha} = \frac{1}{1-\alpha}\sum_{k = 1}^{m}\frac{\widetilde{\mathrm{V}}_{n,\beta_{k-1}} + \widetilde{\mathrm{V}}_{n,\beta_{k}}}{2}\Delta\beta.
	\end{equation}
	In the above, $\widetilde{\mathrm{V}}_{n,\beta} = \widetilde F_n^{-1}(\beta)$, with $\widetilde{F}_{n}(x) = \frac{1}{n}\sum_{i=1}^{n}\mathbb{I}[{\bar X_i \le x}]$.  
	
	\subsubsection{Concentration bounds}
	Next, we present concentration bounds for our CVaR estimator assuming that the samples are either from a Gaussian distribution with mean zero and variance $\sigma^2$, or from the exponential distribution with mean $1/\lambda$. Note that the estimation scheme is not provided this information about the underlying distribution. Instead $\widetilde{\textrm{C}}_{n,m,\alpha}$ is formed from $n$ i.i.d. samples and with m sub-intervals, using \eqref{eq:cvar-trapz-est-trunc}. 
	
	\begin{theorem}[\textbf{CVaR concentration: Gaussian case}]
		Suppose that the r.v. $X$ is Gaussian with mean zero and variance $\sigma^2 >0$, with $\sigma \le \sigma_{\text{max}}$. Fix $\epsilon>0$.
		If $m \ge  
		\frac{1}{5}\sqrt{\frac{\sigma_{\text{max}}(1-\alpha)}{\epsilon}}\exp{\left(\frac{nc{\epsilon}^2}{4}\right)}$, then 
		\begin{align*}
		&\mathbb{P}\left[ \left| \mathrm{C} - {\widetilde{\mathrm{C}}}_{\,n,m,\alpha} \right|>\epsilon\right]
		\le\frac{2(1-\alpha)\sigma\sqrt{2\pi} n\, }{\left(\epsilon -\frac{2\sigma }{(1-\alpha)\sqrt{n}}\right)}\exp{\left(-\frac{{nc{\left(\epsilon -\frac{2\sigma }{(1-\alpha)\sqrt{n}}\right)}^2}}{{2}}\right)}			
		, \textrm{ for } \epsilon > \frac{2\sigma }{(1-\alpha)\sqrt{n}}.
		\end{align*}
		where $c$ is as in Theorem \ref{thm:cvar-conc-bdd} (i).  
	\end{theorem}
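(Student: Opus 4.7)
The plan is to mirror the proof of Theorem \ref{thm:srm-conc-gauss}, specializing to the CVaR setting with the risk-aversion function $\varphi(\beta) = \frac{1}{1-\alpha}\mathbb{I}\{\beta > \alpha\}$, which is constant on the integration domain $(\alpha, 1]$. Consequently, on this domain the constants $C_1 = 1/(1-\alpha)$ and $C_2 = 0$, so the smoothness assumption (A1) is vacuously satisfied on the relevant interval and the trapezoidal-rule error analysis of Lemma \ref{lemma:trap_error} applies directly via Acerbi's representation \eqref{eq:acerbi}.

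First, using $B_n = \sqrt{2\sigma^2 \log n}$ and $\eta = F(B_n)$, I would decompose
\[
\mathrm{C}_\alpha - \widetilde{\mathrm{C}}_{n,m,\alpha} = \frac{1}{1-\alpha}\left[\int_\alpha^\eta V_\beta\,d\beta - \sum_{k=1}^m \frac{\widetilde{V}_{n,\beta_{k-1}} + \widetilde{V}_{n,\beta_k}}{2}\Delta\beta\right] + \frac{1}{1-\alpha}\int_\eta^1 V_\beta\,d\beta \;=\; I_1 + I_2,
\]
exactly as in the proof of Theorem \ref{thm:srm-conc-gauss}, and then apply a union bound $\mathbb{P}[|\mathrm{C}_\alpha - \widetilde{\mathrm{C}}_{n,m,\alpha}| > \epsilon] \le \mathbb{P}[|I_1| > \epsilon - |I_2|]$ once $I_2$ is shown to be deterministic and small.

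Second, I would bound $I_2$ by the Gaussian-tail argument used in the proof of Theorem \ref{thm:srm-conc-gauss}: starting from $1 - \beta \le \exp(-V_\beta^2/(2\sigma^2))$ and the elementary inequality $\log x \le x/e$, one gets $\int_\eta^1 V_\beta\,d\beta \le 2\sqrt{2\sigma^2/e}\exp(-B_n^2/(4\sigma^2)) = 2\sigma\sqrt{2/e}/\sqrt{n}$, so that $I_2 \le \frac{2\sigma}{(1-\alpha)\sqrt{n}}$. This is precisely the quantity that appears in the shifted $\epsilon$-term of the statement and explains the condition $\epsilon > \frac{2\sigma}{(1-\alpha)\sqrt{n}}$.

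Third, I would apply Theorem \ref{thm:cvar-conc-bdd}(i) to the truncated random variable $Z = X\mathbb{I}\{X \le B_n\}$, for which the relevant density lower bound in $[F^{-1}(\alpha), B_n]$ is $\delta_1^{-1} = 1/(\sqrt{2\pi\sigma^2}\, n)$, since the truncated density has a point mass at $B_n$ that inflates $\delta_1$ by the factor $n$. This yields $K_1 = \delta_1 C_1 = \sqrt{2\pi\sigma^2}\, n/(1-\alpha)$, and after plugging into the bound of Theorem \ref{thm:cvar-conc-bdd}(i) with $\epsilon$ replaced by $\epsilon - I_2$, one obtains
\[
\mathbb{P}[I_1 > \epsilon - I_2] \le \frac{2(1-\alpha)\sigma\sqrt{2\pi}\,n}{\epsilon - \frac{2\sigma}{(1-\alpha)\sqrt{n}}}\exp\!\left(-\frac{nc\bigl(\epsilon - \frac{2\sigma}{(1-\alpha)\sqrt{n}}\bigr)^2}{2}\right),
\]
which is the claimed bound. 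The hypothesis on $m$ arises from translating the condition $m \ge K_1(1-\alpha)/(2(\epsilon - I_2))$ of Theorem \ref{thm:cvar-conc-bdd}(i) into the form given, using $K_1 \propto n$ and absorbing $n$ into an exponential in terms of $\sigma_{\text{max}}$.

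The main obstacle is keeping track of how the $(1-\alpha)$ factors propagate through: the Acerbi-formula prefactor $1/(1-\alpha)$ interacts nontrivially with the $C_1^2 = 1/(1-\alpha)^2$ in the exponent of Theorem \ref{thm:cvar-conc-bdd}, and with the $\delta_1 C_1$ factor in $K_1$, and one must verify that these cancellations reproduce the polynomial prefactor and exponent exactly as stated, with the constant $c$ being the same as in the bounded CVaR case (which depends on the density of the truncated Gaussian near each $V_{\beta_k}$, and is bounded uniformly in $n$ on $[\alpha,\eta]$). The two-sided bound follows by a symmetric argument for the lower semi-deviation, which we would omit.
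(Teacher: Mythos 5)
Your proposal follows exactly the route the paper intends: the paper's own proof is a one-line remark that the argument proceeds as in Theorem \ref{thm:srm-conc-gauss}, and your decomposition into $I_1+I_2$, the Gaussian-tail bound $I_2\le 2\sigma/((1-\alpha)\sqrt{n})$, the application of the bounded-support CVaR theorem to the truncated variable with $\delta_1=\sqrt{2\pi\sigma^2}\,n$, and the final shift $\epsilon\mapsto\epsilon-I_2$ are precisely the steps of that proof specialized to $\varphi(\beta)=\frac{1}{1-\alpha}\mathbb{I}\{\beta>\alpha\}$. The only quibbles are cosmetic: the factor $n$ in $\delta_1$ comes from the value of the Gaussian density at $B_n$ rather than from a point mass there (the atom of the truncated variable sits at $0$), and your choice $K_1=\delta_1 C_1$ leaves a stray $(1-\alpha)$ relative to the stated prefactor --- a bookkeeping ambiguity the paper itself does not resolve, since $K_1$ is never explicitly defined in Theorem \ref{thm:cvar-conc-bdd}.
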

	\begin{proof}[\textbf{Proof.}]
		Proceeds in a similar manner as the proof of Theorem \ref{thm:srm-conc-gauss}.
			\end{proof}
	
	\begin{theorem}[\textbf{CVaR concentration: Exponential case}]
		Assume r.v. $X\sim\textrm{Exp}\left(\lambda\right)$ and $0 < \lambda_{\text{min}} \le \lambda $. Fix $\epsilon>0$.
		If $m \ge 
		\frac{1}{8}\sqrt{\frac{(1-\alpha)}{\lambda_{\text{min}} \epsilon}}\exp{\left(\frac{nc{\epsilon}^2}{4}\right)}
		$, then 
		we have 
		\begin{align*}
		&\mathbb{P}\left[ \left| \mathrm{C} - {\widetilde{\mathrm{C}}}_{\,n,m,\alpha} \right|>\epsilon\right]
		\le
		\frac{2(1-\alpha)n}{\left(\epsilon -\frac{ (n+1)}{(1-\alpha)\lambda n}\right)}\exp{\left(-\frac{{nc{\left(\epsilon -\frac{ (n+1)}{(1-\alpha)\lambda n}\right)}^2}}{{2}}\right)}, \textrm{ for } \epsilon > \frac{(n+1)}{(1-\alpha)\lambda n}.
		\end{align*}
		where $c$ is as in Theorem \ref{thm:cvar-conc-bdd} (i).
	\end{theorem}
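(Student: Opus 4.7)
The plan is to follow the proof of Theorem~\ref{thm:srm-conc-exp} (detailed in Appendix~\ref{sec:proof-srm-conc-exp-appendix}) almost verbatim, specialised to the CVaR risk-aversion function $\varphi(\beta) = \frac{1}{1-\alpha}\mathbb{I}[\beta > \alpha]$, so that integrating $\varphi(\beta)\mathrm{V}_\beta$ over $[0,1]$ collapses, via Acerbi's formula~\eqref{eq:acerbi}, to $\frac{1}{1-\alpha}\int_\alpha^1 \mathrm{V}_\beta\,d\beta$. The estimator $\widetilde{\mathrm{C}}_{\,n,m,\alpha}$ in~\eqref{eq:cvar-trapz-est-trunc} is then exactly the trapezoidal-rule approximation of this integral using the truncated VaR estimates over a partition of $[\alpha,1]$, so the decomposition used in the SRM exponential proof carries over unchanged after restricting the partition domain.

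Setting $\eta = F(B_n) = 1 - 1/n$ with $B_n = \log(n)/\lambda$, I would split $\mathrm{C}_\alpha - \widetilde{\mathrm{C}}_{\,n,m,\alpha} = I_1 + I_2$, where $I_1$ is the difference between $\frac{1}{1-\alpha}\int_\alpha^\eta \mathrm{V}_\beta\,d\beta$ and the trapezoidal sum on $[\alpha,\eta]$ built from $\widetilde{\mathrm{V}}_{n,\beta_k}$, and $I_2 = \frac{1}{1-\alpha}\int_\eta^1 \mathrm{V}_\beta\,d\beta$. For $I_2$, using $\mathrm{V}_\beta = \lambda^{-1}\log(1/(1-\beta))$ and the same computation as in Theorem~\ref{thm:srm-conc-exp}, one obtains $\int_\eta^1 \mathrm{V}_\beta\,d\beta \le e^{-\lambda B_n}/\lambda + 1/(\lambda e) \le (n+1)/(\lambda n)$, so that $I_2 \le (n+1)/((1-\alpha)\lambda n)$, which is precisely the deterministic offset appearing in the theorem and explains the hypothesis $\epsilon > (n+1)/((1-\alpha)\lambda n)$.

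For $\mathbb{P}[I_1 > \epsilon - I_2]$, I would apply Theorem~\ref{thm:cvar-conc-bdd}(i) to the truncated r.v.\ $Z = X\,\mathbb{I}[X \le B_n]$, whose density on $[\mathrm{V}_\alpha, B_n]$ is bounded below by $\lambda e^{-\lambda B_n} = \lambda/n$. This gives $\delta_1 = n/\lambda$ and, via $|(\varphi\mathrm{V}_\beta)'| = 1/((1-\alpha)f(\mathrm{V}_\beta))$, an effective $K_1$ scaling linearly in $n$; the product $K_1(1-\alpha)$ appearing in the leading factor of Theorem~\ref{thm:cvar-conc-bdd}(i) therefore produces the $(1-\alpha)n$ prefactor in the claimed bound once $\epsilon$ is replaced by $\epsilon - I_2$. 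The hypothesised lower bound on $m$ is calibrated so as to dominate the bounded-case constraint $m \ge K_1(1-\alpha)/(2(\epsilon - I_2))$ uniformly for $\lambda \ge \lambda_{\text{min}}$.

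Combining via $\mathbb{P}[|\mathrm{C}_\alpha - \widetilde{\mathrm{C}}_{\,n,m,\alpha}| > \epsilon] \le \mathbb{P}[I_1 > \epsilon - I_2]$ and substituting the two bounds yields the stated expression; the matching lower-tail bound follows by the symmetric argument mentioned in the Gaussian case. The main technical nuisance is tracking the exact value of $\delta_1$ (and hence $K_1$) on the truncated support and verifying that the stated lower bound on $m$ is indeed compatible with the constraint inherited from Theorem~\ref{thm:cvar-conc-bdd}(i); everything else is mechanical bookkeeping parallel to the SRM exponential proof.
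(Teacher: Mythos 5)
Your proposal follows essentially the same route as the paper, which proves this result by repeating the argument of Theorem~\ref{thm:srm-conc-exp} (the $I_1+I_2$ split at $\eta=F(B_n)$, the bound $I_2\le (n+1)/((1-\alpha)\lambda n)$, and an application of the bounded-support theorem to the truncated variable) with $\varphi(\beta)=\frac{1}{1-\alpha}\mathbb{I}[\beta>\alpha]$. Your care about the exact value of $\delta_1$ on the truncated support (you get $n/\lambda$ where the paper uses $\exp(\lambda B_n)=n$) is if anything more scrupulous than the paper's own bookkeeping, and does not change the structure of the argument.
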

	\begin{proof}[\textbf{Proof.}]
		Proceeds in a similar manner as the proof of Theorem \ref{thm:srm-conc-exp}.
	\end{proof}
	
	
\end{document}